\def\R{{\mathbb{R}}}
\def\pr{{\rm Pr}}
\def\E{{\mathbb E}}
\def\X{{\mathcal X}}
\def\F{{\mathcal F}}
\def\CC{{\mathcal C}}
\def\P{{\mathcal P}}
\def\B{{\mathcal B}}
\def\diam{{\mbox{\rm diam}}}
\newtheorem{thm}{Theorem}
\newtheorem{lemma}[thm]{Lemma}
\newenvironment{proof}{\noindent {\sc Proof:}}{$\Box$ \medskip}
\title{Expressivity of expand-and-sparsify representations}
\author[1]{Sanjoy Dasgupta}
\author[2]{Christopher Tosh}
\affil[1]{University of California, San Diego}
\affil[2]{Columbia University}
\begin{document}

\maketitle

\begin{abstract}
A simple sparse coding mechanism appears in the sensory systems of several organisms: to a coarse approximation, an input $x \in \R^d$ is mapped to much higher dimension $m \gg d$ by a random linear transformation, and is then sparsified by a winner-take-all process in which only the positions of the top $k$ values are retained, yielding a $k$-sparse vector $z \in \{0,1\}^m$. We study the benefits of this representation for subsequent learning.

We first show a universal approximation property, that arbitrary continuous functions of $x$ are well approximated by linear functions of $z$, provided $m$ is large enough. This can be interpreted as saying that $z$ unpacks the information in $x$ and makes it more readily accessible. The linear functions can be specified explicitly and are easy to learn, and we give bounds on how large $m$ needs to be as a function of the input dimension $d$ and the smoothness of the target function. Next, we consider whether the representation is adaptive to manifold structure in the input space. This is highly dependent on the specific method of sparsification: we show that adaptivity is not obtained under the winner-take-all mechanism, but does hold under a slight variant. Finally we consider mappings to the representation space that are random but are attuned to the data distribution, and we give favorable approximation bounds in this setting.
\end{abstract}

\section{Introduction}

A striking neural architecture  appears in the sensory systems of several organisms: a transformation from a low-dimensional dense representation of sensory stimulus to a much higher-dimensional, sparse representation. This has been found, for instance, in the olfactory system of the fly~\cite{W13} and mouse~\cite{SA09}, the visual system of the cat~\cite{OF04}, and the electrosensory system of the electric fish~\cite{CLM11}.

Consider, for example, the olfactory system of Drosophila~\cite{TBL08,MTJ09,W13,CRAA13}. The primary sense receptors of the fly are the roughly 2,500 odor receptor neurons (ORNs) in its antennae and maxillary palps. These can be clustered into 50 types, based on their odor responses. All ORNs of a given type converge on a corresponding glomerulus in the antennal lobe; there are 50 of these in a topographically fixed configuration, and their activations constitute a dense, 50-dimensional sensory input vector. This information is then relayed via projection neurons to a collection of roughly 2000 Kenyon cells (KCs) in the mushroom body, with each KC receiving signal from roughly 5-10 glomeruli. The pattern of connectivity between the glomeruli and Kenyon cells appears random~\cite{CRAA13}. The output of the KCs is integrated by a single anterior paired lateral (APL) neuron which then provides negative feedback causing all but the 5\% highest-firing KCs to be suppressed~\cite{LBCLM14}. The result is a sparse high-dimensional representation of the sensory input, that is the basis for subsequent learning.

\begin{figure}
\begin{center}
\includegraphics[width=3in]{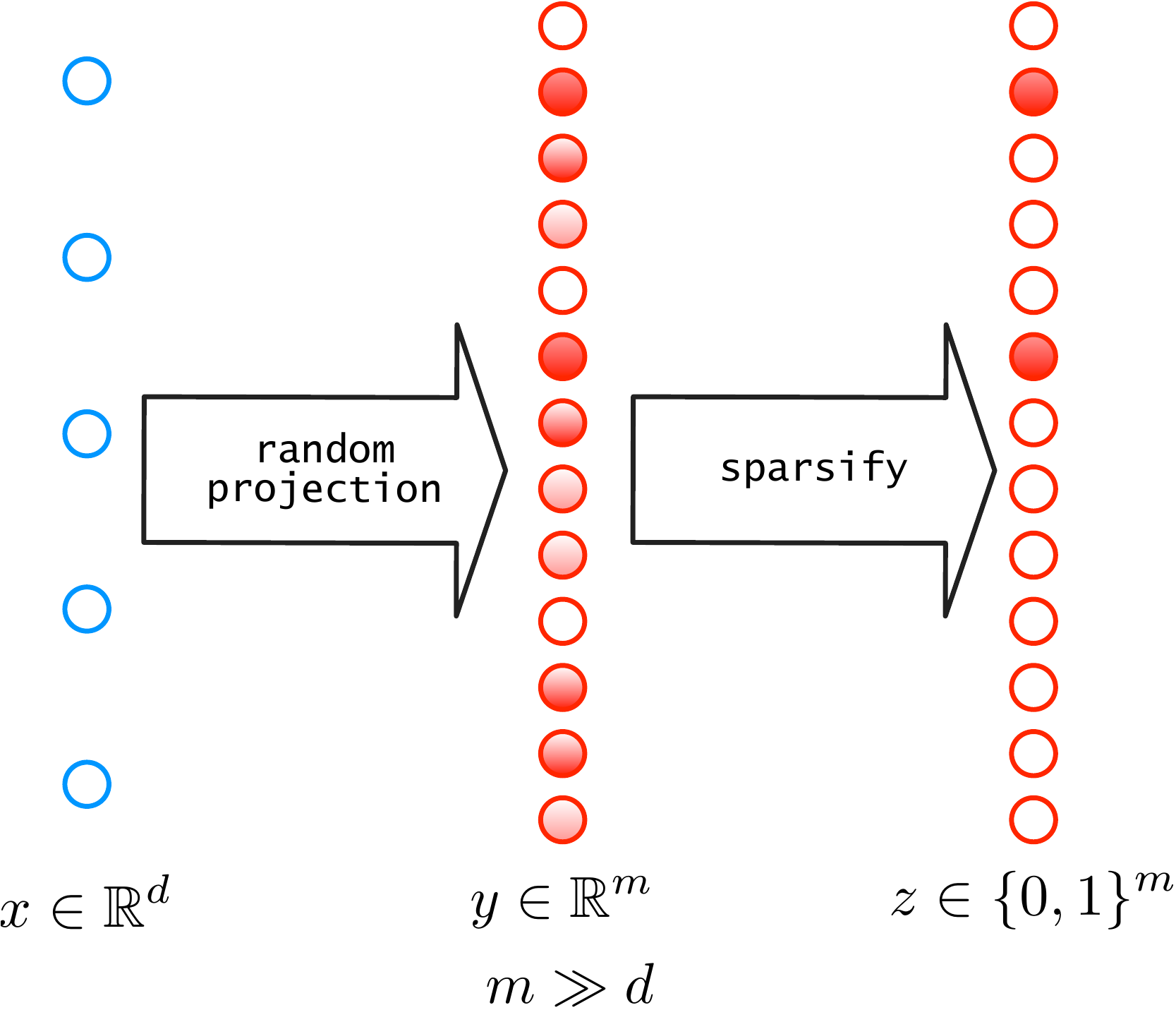}
\end{center}
\caption{The expand-and-sparsify architecture.}
\label{fig:architecture}
\end{figure}

To study the benefits of this representation, we start by modeling the process by which it is produced (Figure~\ref{fig:architecture}). Let $\X \subset \R^d$ denote a space of sensory inputs. A given $x \in \X$ is transformed in two steps.
\begin{enumerate}
\item A random linear mapping to higher dimension, $y = \Theta x \in \R^m$.

Here $\Theta$ is an $m \times d$ random matrix whose rows are drawn independently from some distribution $\nu$ over $\R^d$. For instance, $\nu$ might be the uniform distribution over the unit sphere $S^{d-1}$, or a spherical Gaussian $N(0, \sigma^2 I_d)$, or a distribution that depends on the data.

Let $\theta_j$ be the $j$th row of $\Theta$. The $j$th coordinate of $y$ is given by the dot product $y_j = \theta_j \cdot x$.

\item A sparsification operation that yields a vector $z \in \{0,1\}^m$.

This is achieved by identifying the locations of the $k$ largest entries of $y$:
$$ z_j =
\left\{
\begin{array}{ll}
1 & \mbox{if $y_j$ is one of the $k$ largest entries of $y$} \\
0 & \mbox{otherwise}
\end{array}
\right.
$$
(breaking ties arbitrarily). We refer to this as {\it $k$-winner-take-all} ($k$-WTA).

\end{enumerate}

Given the ubiquity of the expand-and-sparsify architecture, there have been a number of mathematical studies of its properties. The work of \cite{BS14} analyzed the effect of the transformation $x \rightarrow z$ on cluster structure; \cite{DSN17} showed that it is locality-preserving and that the sparse representation is well-suited to similarity search; \cite{DSSN18} showed how it can be used for novelty detection; and \cite{PV19} showed how expand-and-sparsify can be composed to solve more elaborate computational problems.

In this work, we study the benefits of the high-dimensional sparse binary representation $z$ for subsequent {\it learning}. Two particular questions interest us. (1) Does $z$ ``unpack'' the information in $x$ and make it more accessible? For instance, is it the case that any function $f(x)$ is well approximated by a {\it linear} function of $z$, if $m$ is large enough, and that this linear function is easy to learn? (2) How is manifold structure in $x$ reflected in $z$?

With regard to the first question, one basic observation is that applying a linear function to $z$ is akin to applying a two-layer neural network to $x$. Classical results assert that such networks are capable of representing arbitrary continuous functions $f$ of the input, provided the hidden layer is wide enough~\cite{C89,F89, HSW89,B93}. There is also work that quantifies the effects of picking random directions for the first layer (e.g., \cite{RR07}). A key difference in our setting is the sparsity of $z$: it is unclear if approximation results for neural nets continue to hold when hidden layers are forced to be sparse. We answer this question in the affirmative for the architecture described above (Theorem~\ref{thm:general-rate}), using techniques that are familiar from the study of nonparametric estimators such as nearest neighbor regression~\cite{DGL96}. We give an explicit expression for the resulting linear function of $z$, and it is seen to be easily learnable. We also give an upper bound on how long $z$ needs to be---that is, an upper bound on $m$---in terms of the input dimension $d$ and the smoothness of the target function.

Moving on to the second question, we look at situations where the input space $\X$ is not fully $d$-dimensional but rather lies on a lower $d_o$-dimensional manifold, as is often the case with sensory data. In such cases, the representation size ($m$) would ideally scale primarily with $d_o$ rather than $d$. We begin by showing that the architecture described above is {\it not} adaptive to manifold structure (Theorem~\ref{thm:lower-bound}). But then we consider an alternative sparsification mechanism that we call {\it $k$-thresholding}:
\begin{itemize}
\item Each coordinate $z_j$ has its own threshold $\tau_j$, and fires whenever $y_j \geq \tau_j$.
\item The value $\tau_j$ is set so that $z_j$ fires a $k/m$ fraction of the time. 
\end{itemize}
This scheme is biologically plausible, is similar in spirit to $k$-WTA, and produces representations that are $k$-sparse {\it in expectation}. However, it is slightly more adaptive to the data distribution, and this small change is enough to make it adapt, provably, to manifold structure (Theorem~\ref{thm:manifold-approx}).

In all the results above, the random linear mapping to higher dimension is not tailored in any way to the statistics of the data. It is impressive that universal approximation results hold in this situation, but one would nonetheless expect better performance if the matrix $\Theta$ were at least somewhat tuned to the input. We consider a mild form of such tuning, in which rows of $\Theta$ are chosen from roughly the same distribution as the data. It is possible, for instance, that this kind of adaptation to the sensory environment might be achievable by evolution or by a learning process. We show significantly stronger approximation bounds in this setting (Theorem~\ref{thm:manifold-distrib-approx}).

All proofs are in Section~\ref{sec:proofs}.

\section{Preliminaries}

Let $\X \subset \R^d$ denote the input space, and $\mu$ a distribution on $\X$ from which inputs are drawn.

In the fly, inputs are rescaled to have roughly the same norm~\cite{OBW10}. To see why this is necessary, notice that because of the $k$-winner-take-all operation, all vectors $\{cx: c > 0\}$ generate exactly the same $z$-representation as $x$: information about the lengths of vectors is lost.

Our analysis will use the $\ell_2$ norm, denoted $\|\cdot\|$. We will assume that inputs are normalized to have unit length: $\X \subset S^{d-1} = \{x \in \R^d: \|x\| = 1\}$. Two other pieces of norm-related notation: the open ball of radius $r$ centered at $x$ is denoted $B(x,r) = \{z \in \R^d: \|z-x\| < r\}$, and the diameter of a set $S \subset \R^d$ is defined as $\diam(S) = \sup_{x,y \in S} \|x - y\|$.

In the expand-and-sparsify architecture, an input $x \in \R^d$ is assigned a sparse binary representation $z \in \{0,1\}^m$ by applying (i) a linear map $y = \Theta x$ followed by (ii) a sparsification operation. We study two choices for each step. For (i), we assume that the rows of the $m \times d$ random matrix $\Theta$ are chosen independently at random from some distribution $\nu$. We first look at the case where $\nu$ is the uniform distribution over the $d$-dimensional unit sphere (Section~\ref{sec:basic-guarantee}), and we later consider a setting in which $\nu$ is more attuned to the statistics of the data. For (ii), vector $y \in \R^m$ is sparsified to $z$ either by $k$-winner-take-all (Section~\ref{sec:basic-guarantee}) or by $k$-thresholding (Section~\ref{sec:k-thresholding}).

\section{Approximation guarantees under winner-take-all sparsification}
\label{sec:basic-guarantee}

In this section, we study the effects of $k$-WTA sparsification.

Let $f: \X \rightarrow \R$ denote a target function that we wish to approximate using a {\it linear} function of the expanded-and-sparsified representation $z$. Since $z$ is $k$-sparse, we can write such a linear function as $(1/k) \sum_{j=1}^m w_j z_j$, for some coefficients $w_1, \ldots, w_m$. 

As long as suitable coefficients exist, they can be learned by algorithms like stochastic gradient descent, but our results in fact hold for an {\it explicit} choice of the $\{w_j\}$. For any $1 \leq j \leq m$, let $C_j$ be the set of inputs $x$ that cause $z_j$ to fire,
$$ C_j = \{x \in \X: z_j = 1\} .$$
This set depends on the random matrix $\Theta$. Now, let $w_j$ be {\it the average value of $f$ in region $C_j$}. To make this precise, for any measurable $S \subset \X$ with $\mu(S) > 0$, define
$f(S) = (1/\mu(S)) \int_S f \, d \mu $.
Then $w_j = f(C_j)$. Notice that this weight is simply the expected value of $z_j f(x)$ and can thus be learned via a Hebbian rule.

Let $\widehat{f}$ denote the resulting function,
\begin{equation}
\widehat{f}(x) = \frac{1}{k} \sum_{j=1}^m w_j z_j = \frac{1}{k} \sum_{j: x \in C_j} f(C_j) .
\label{eq:wta-fn}
\end{equation}
We will bound the discrepancy between $f$ and $\widehat{f}$.

\subsection{Cell diameters}

Classical results in statistics show that it is not possible to give rates of convergence for nonparametric estimation without conditions on the target function~\cite{S80}. We will make the common assumption that there is a constant $\lambda > 0$ such that $f$ is $\lambda$-Lipschitz with respect to $\ell_2$ norm,
$$f(x) - f(y) \leq \lambda \|x-y\| \mbox{\ \ for all $x,y \in \X$.}$$
We can then bound the approximation error of $\widehat{f}$ in terms of the diameters of the cells $C_j$.
\begin{lemma}
If $f$ is $\lambda$-Lipschitz, and $\widehat{f}$ is defined as in (\ref{eq:wta-fn}), then
$$ \sup_{x \in \X} \left|\widehat{f}(x) - f(x) \right| \ \leq \ \frac{\lambda}{k} \sum_{j: x \in C_j} \diam(C_j) .$$
\label{lemma:wta-approx-bd}
\end{lemma}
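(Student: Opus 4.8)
The plan is to bound the pointwise error $|\widehat{f}(x) - f(x)|$ directly by unfolding the definition of $\widehat{f}$ and exploiting the $k$-sparsity of $z$. Fix an arbitrary $x \in \X$. Since $z$ is exactly $k$-sparse, there are precisely $k$ indices $j$ with $x \in C_j$, and for those indices the coefficient is $w_j = f(C_j)$, the $\mu$-average of $f$ over $C_j$. Hence
\begin{equation*}
\widehat{f}(x) - f(x) \ = \ \frac{1}{k} \sum_{j : x \in C_j} f(C_j) \ - \ f(x) \ = \ \frac{1}{k} \sum_{j : x \in C_j} \bigl( f(C_j) - f(x) \bigr),
\end{equation*}
where in the last step I use that the sum over the $k$ active indices of the constant $f(x)/k$ equals $f(x)$. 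By the triangle inequality it therefore suffices to bound each term $|f(C_j) - f(x)|$ for an index $j$ with $x \in C_j$.

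The key step is to observe that $f(C_j)$ is an average of values $f(x')$ over $x' \in C_j$, and that $x$ itself lies in $C_j$. Writing $f(C_j) = (1/\mu(C_j)) \int_{C_j} f(x')\, d\mu(x')$, I can pull $f(x)$ inside the average and get
\begin{equation*}
\bigl| f(C_j) - f(x) \bigr| \ = \ \left| \frac{1}{\mu(C_j)} \int_{C_j} \bigl( f(x') - f(x) \bigr)\, d\mu(x') \right| \ \leq \ \frac{1}{\mu(C_j)} \int_{C_j} \bigl| f(x') - f(x) \bigr|\, d\mu(x').
\end{equation*}
Now the $\lambda$-Lipschitz assumption gives $|f(x') - f(x)| \leq \lambda \|x' - x\| \leq \lambda \diam(C_j)$ for every $x' \in C_j$, since both $x$ and $x'$ belong to $C_j$. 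The integral of a constant bound over $C_j$ divided by $\mu(C_j)$ is just that constant, so $|f(C_j) - f(x)| \leq \lambda \diam(C_j)$.

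Substituting back, $|\widehat{f}(x) - f(x)| \leq (1/k) \sum_{j : x \in C_j} \lambda \diam(C_j) = (\lambda/k) \sum_{j : x \in C_j} \diam(C_j)$, and taking the supremum over $x \in \X$ yields the claim. There is essentially no hard obstacle here: the only points requiring a little care are (i) that the active set has size exactly $k$ so the rewriting $\frac{1}{k}\sum_{j : x \in C_j} f(x) = f(x)$ is valid — this is guaranteed by the $k$-WTA definition with ties broken arbitrarily — and (ii) that $\mu(C_j) > 0$ for every active cell, so that $f(C_j)$ is well defined; this holds because $x \in C_j$ and, implicitly, because the relevant cells have positive measure (otherwise the corresponding weight and term can be dropped without affecting $\widehat f$ on a set of full measure). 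The Lipschitz bound and the triangle inequality do all the remaining work.
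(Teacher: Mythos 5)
Your proof is correct and follows essentially the same route as the paper: decompose $\widehat{f}(x) - f(x)$ as $\frac{1}{k}\sum_{j:x\in C_j}(f(C_j)-f(x))$ using exact $k$-sparsity, then bound each term by $\lambda\,\diam(C_j)$ via the Lipschitz property applied to the average over $C_j$. You have simply spelled out the ``average value of $f$ in $C_j$ differs from $f(x)$ by at most $\lambda\,\diam(C_j)$'' step in more detail than the paper does.
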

The diameters of cells depend upon the random matrix $\Theta$. Recall that the rows of this matrix, $\theta_1, \ldots, \theta_m$, are drawn independently at random from some distribution $\nu$ over $\R^d$. 
We will take $\nu$ to be a distribution over {\it unit vectors} in $\R^d$. On any input $x$, the $k$ coordinates of $z$ that fire are those with the highest values of $y_j = \theta_j \cdot x$, or equivalently those for which $\theta_j$ is closest to $x$:
$$ y_j \geq y_i
\ \Leftrightarrow \ 
\theta_j \cdot x \geq \theta_i \cdot x
\ \Leftrightarrow \ 
\|x - \theta_j\| \leq \|x - \theta_i\| .$$
Intuitively speaking, the cells $C_j$ are {\it local}: the only inputs that make $z_j$ fire are those that lie near $\theta_j$. We now quantify this. For any $r > 0$, define
\begin{equation}
\nu_o(r) = \inf_{x \in \X} \nu(B(x,r)) 
\label{eq:nu-min}
\end{equation}
to be the smallest probability mass of any ball of radius $r$ centered in $\X$. Then the regions $C_j$ can be shown to have the following locality property.
\begin{lemma}
There is an absolute constant $c_o > 0$ for which the following holds. Pick any $0 < \delta < 1$. With probability at least $1-\delta$,
$$ \max_j \diam(C_j) \leq 2 \nu_o^{-1} \left( \frac{2k}{m} + \frac{2c_o (d \log m + \log 1/\delta)}{m} \right) .$$
\label{lemma:C-diam-bound}
\end{lemma}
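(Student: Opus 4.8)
The plan is to bound $\diam(C_j)$ by showing that $C_j$ is contained in a small ball around $\theta_j$. Fix a coordinate $j$ and suppose $x \in C_j$, so $z_j = 1$, meaning $\theta_j$ is among the $k$ nearest of the $\theta_i$ to $x$ (equivalently, at most $k-1$ of the points $\theta_1,\dots,\theta_m$ lie strictly closer to $x$ than $\theta_j$ does). I want to argue that this forces $\|x - \theta_j\|$ to be small: if $\|x - \theta_j\|$ were large, say equal to $r$, then the ball $B(x, r)$ would have $\nu$-mass at least $\nu_o(r)$, so in expectation $m \,\nu_o(r)$ of the rows $\theta_i$ land in $B(x,r)$ and are therefore closer to $x$ than $\theta_j$. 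Once $m\,\nu_o(r)$ comfortably exceeds $k$, this contradicts $x \in C_j$ — provided the empirical count of rows in $B(x,r)$ is close to its expectation.

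The first real step is therefore a uniform concentration bound: with probability at least $1-\delta$, simultaneously over all $x \in \X$ and all $r > 0$, the number of rows $\theta_i$ falling in $B(x,r)$ is at least $\tfrac12 m\,\nu(B(x,r)) - O(d\log m + \log(1/\delta))$, or something of this flavor. This is where the $d \log m$ term in the statement comes from, and it is the main obstacle: the family of sets $\{B(x,r) : x \in \R^d, r > 0\}$ is infinite, so I need a VC-type argument. Balls in $\R^d$ have VC dimension $d+1$, but here the situation is a little different because we are counting rows relative to their rank (the $k$-th nearest neighbor threshold) rather than relative to a fixed radius. The cleanest route is: fix $r$, apply a relative Chernoff/VC bound to the set system of balls of radius $r$ to control $|\{i : \theta_i \in B(x,r)\}|$ uniformly in $x$; then take a union bound over a suitable discretization of $r \in (0, \diam(\X)]$ (only $\mathrm{poly}(m)$ values of $r$ matter, since adding one more row at a time changes the relevant count, and there are $m$ rows), absorbing the discretization into the $\log m$ factor. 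Alternatively one can phrase the whole thing via the VC dimension of the range space of "balls", which is $O(d)$, giving the $d \log m$ directly.

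With the concentration bound in hand, the second step is the deduction. Set
$$r^\star = \nu_o^{-1}\!\left( \frac{2k}{m} + \frac{2c_o(d\log m + \log(1/\delta))}{m} \right),$$
so that $\nu_o(r^\star) \geq \tfrac{2k}{m} + \tfrac{2c_o(d\log m + \log(1/\delta))}{m}$ (using monotonicity of $\nu_o$ and the definition of the generalized inverse). If some $x \in C_j$ had $\|x - \theta_j\| \geq r^\star$, then $\theta_j \notin B(x, r^\star)$, yet by the concentration bound the number of rows inside $B(x, r^\star)$ — all of which are strictly closer to $x$ than $\theta_j$ — is at least $\tfrac12 m\,\nu_o(r^\star) - c_o(d\log m + \log(1/\delta)) \geq k$, contradicting $z_j = 1$. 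Hence every $x \in C_j$ satisfies $\|x - \theta_j\| < r^\star$, so $C_j \subseteq B(\theta_j, r^\star)$ and $\diam(C_j) \leq 2 r^\star$, which is exactly the claimed bound. The constant $c_o$ is chosen in the concentration step and propagated through; the only care needed is that the "$2$" in front of $k/m$ leaves enough slack to kill both the factor-$\tfrac12$ loss in concentration and the additive VC term, which it does.
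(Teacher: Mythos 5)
Your argument matches the paper's: the paper invokes a uniform relative-Chernoff/VC bound over balls (Lemma~\ref{lemma:ball-concentration}, cited from Chaudhuri--Dasgupta) to show every ball of $\nu$-mass at least $\tfrac{2}{m}\bigl(k + c_o(d\log m + \log\tfrac{1}{\delta})\bigr)$ contains $\geq k$ rows, and then deduces $C_j \subset B(\theta_j, r^\star)$ exactly as you do. The only difference is that you sketch how such a concentration bound would be proved (VC dimension of balls, discretization in $r$), whereas the paper cites it directly.
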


For instance, if $k \gg \log m$, or more specifically,
\begin{equation}
k \geq c_o \left(d \log m + \log \frac{1}{\delta} \right)
\label{eq:k-lower-bound}
\end{equation}
then the maximum cell diameter is at most $2 \nu_o^{-1}(4k/m)$.

\subsection{A general approximation bound}

The analysis above gives a basic bound in the case where $\nu$ is uniform over unit vectors.
\begin{thm}
Suppose the expand-and-sparsify process uses (i) a random mapping $\Theta$ based on distribution $\nu$ which is uniform over $S^{d-1}$ and (ii) $k$-winner-take-all sparsification. Pick any $0 < \delta < 1$. If $f$ is $\lambda$-Lipschitz, then with probability at least $1-\delta$ over the choice of $\Theta$, the approximating function $\widehat{f}$ from (\ref{eq:wta-fn}) satisfies
$$ \sup_{x \in \X} \left|\widehat{f}(x) - f(x) \right| \ \leq \ c_d \lambda \left( \frac{2k}{m} + \frac{2c_o (d \log m + \log 1/\delta)}{m} \right)^{1/(d-1)} ,$$
where $c_d$ is a constant that depends on the dimension $d$.
\label{thm:general-rate}
\end{thm}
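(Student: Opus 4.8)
The plan is to combine the two lemmas already established with a quantitative bound on $\nu_o(r)$ for the uniform distribution on the sphere. First I would invoke Lemma~\ref{lemma:wta-approx-bd} to reduce the problem to controlling $(1/k)\sum_{j: x\in C_j}\diam(C_j)$. Since $z$ is exactly $k$-sparse, the sum on the right has exactly $k$ terms for every $x$, so it is at most $\max_j \diam(C_j)$. Then I would apply Lemma~\ref{lemma:C-diam-bound}: with probability at least $1-\delta$ over $\Theta$, this maximum diameter is at most $2\nu_o^{-1}(2k/m + 2c_o(d\log m + \log 1/\delta)/m)$. So the remaining task is purely geometric: lower-bound $\nu_o(r)$ when $\nu$ is uniform on $S^{d-1}$, equivalently invert this bound to control $\nu_o^{-1}$.

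The key geometric estimate is that a ball $B(x,r)$ intersected with the sphere $S^{d-1}$ is a spherical cap, and for a unit vector $x$ on the sphere and small $r$, the cap $B(x,r)\cap S^{d-1}$ has normalized surface measure $\Theta_d(r^{d-1})$ — that is, there are dimension-dependent constants $a_d, b_d > 0$ with $a_d r^{d-1} \le \nu(B(x,r)) \le b_d r^{d-1}$ for all $r$ up to some constant (and by symmetry this is independent of which $x\in S^{d-1}$ we pick, so the infimum in \eqref{eq:nu-min} is attained everywhere). This gives $\nu_o(r) \ge a_d r^{d-1}$, hence $\nu_o^{-1}(t) \le (t/a_d)^{1/(d-1)}$. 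Plugging $t = 2k/m + 2c_o(d\log m + \log 1/\delta)/m$ into the diameter bound from Lemma~\ref{lemma:C-diam-bound} and folding the constant $2$ and $a_d^{-1/(d-1)}$ into a single dimension-dependent constant $c_d$ yields exactly the claimed inequality.

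I expect the main obstacle — really the only nonroutine point — to be nailing down the spherical-cap volume estimate and, in particular, making sure it holds uniformly over the relevant range of $r$. For very small $r$ the cap is essentially a $(d-1)$-dimensional Euclidean ball of radius $\approx r$, giving the $r^{d-1}$ scaling cleanly; but one must check that the argument of $\nu_o^{-1}$ above is indeed small enough (bounded by an absolute constant) for the clean estimate to apply, which is why a lower bound on $m$ relative to $k$, $d$, and $\log(1/\delta)$ — along the lines of \eqref{eq:k-lower-bound} together with $m \gg k$ — is implicitly needed, or else one absorbs the full range into the constant $c_d$ by using a (possibly worse) global bound $\nu(B(x,r)) \ge a_d r^{d-1}$ valid for all $r \le 2 = \diam(S^{d-1})$. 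A minor secondary point is verifying that $\nu_o^{-1}$ is well-defined and monotone (it is, since $\nu_o$ is nondecreasing in $r$), so that composing the bounds is legitimate. Everything else is bookkeeping: counting that the sum has $k$ terms, and collecting constants.
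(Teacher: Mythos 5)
Your proposal is correct and is essentially the paper's own argument: combine Lemma~\ref{lemma:wta-approx-bd} with Lemma~\ref{lemma:C-diam-bound}, then bound $\nu_o(r)$ below by a dimension-dependent constant times $r^{d-1}$ using the spherical-cap volume, and invert. The paper's proof is more terse and does not spell out the range-of-$r$ caveat you flag, but that concern is handled exactly as you suggest (absorbing the worst case, maximum diameter $2$, into $c_d$), so there is no substantive difference.
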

If $k$ satisfies (\ref{eq:k-lower-bound}), then the bound simplifies to $c_d \lambda (4k/m)^{1/(d-1)}$. And for any $k \ll m$, taking $m \rightarrow \infty$ yields an arbitrarily good approximation of $f$ by a linear function of $z$. 

Theorem~\ref{thm:general-rate} is quite different in flavor from the well-known universal approximation theorems for neural nets with a single hidden layer~\cite{C89,F89, HSW89,B93}. It (i) requires no learning in the mapping from the input to the hidden layer, (ii) allows the hidden layer to be as sparse as desired, (iii) has a simple form for the weights from the hidden to output layer, and (iv) gives explicit bounds on the required size of the hidden layer. It is much closer, in both conclusion and proof technique, to classical results in nonparametric estimation which study asymptotic convergence of kernel regression and nearest neighbor methods~\cite{DGL96}. Of these, the connection to nearest neighbor is particularly strong: indeed, the prediction at any point $x$ depends on its $k$ nearest neighbors amongst the $\theta$'s, which can be thought of as surrogate ``training points''. Curiously, while results on nearest neighbor consistency have traditionally required the training data to be from the same distribution as test data~\cite{CH67,S77,CD14}, we get convergence with $\theta$'s that are unrelated to the data distribution.

\subsection{Adaptivity to manifold structure?}
\label{sec:not-adaptive}

The approximation bound of Theorem~\ref{thm:general-rate} is typical for nonparametric estimation, but scales poorly with $d$, the dimension of the input. What if the input space $\X \subset \R^d$ lies on a submanifold of dimension $d_o \ll d$, as is often hypothesized to be the case with sensory data? We would then hope for a bound in which the dependence on $d$ (in the exponent) is replaced by $d_o$.

Sadly, we do not get adaptivity to intrinsic dimension when the rows of the random mapping $\Theta$ are chosen uniformly at random from $S^{d-1}$ and winner-take-all sparsification is used. To see why, consider a simple example in which the data lie on a one-dimensional manifold, a circle in $S^{d-1}$:
\begin{equation}
\X_1 = \{(x_1, x_2, 0, 0, \ldots, 0) \in \R^d: x_1^2 + x_2^2 = 1\}.
\label{eq:circle}
\end{equation}
Since we are primarily interested in how approximation error scales with $m$, let's take $k=1$. As in the proof of Theorem~\ref{thm:general-rate}, with high probability every $x \in \X_1$ will have some $\theta_j$ within distance $1/m^{1/(d-1)}$, roughly. This means that any $\theta_j$ further than this from $\X_1$ is unused: its corresponding $z_j$ will {\it never} get activated! The number of $\theta$'s that are actually used turns out to be roughly $m^{1/(d-1)}$, leading to the same error rate as in Theorem~\ref{thm:general-rate}, despite the low dimensionality of the data.

\begin{thm}
For any $d > 3$, let input space $\X_1$ be the one-dimensional submanifold of $\R^d$ given in (\ref{eq:circle}). Take $k=1$. Suppose that random matrix $\Theta$ has rows chosen from the distribution $\nu$ that is uniform over $S^{d-1}$. For any $0 < \lambda < 1$, there exists a $\lambda$-Lipschitz function $f: \X_1 \rightarrow \R$ such that with probability at least $1/2$ over the choice of $\Theta$, no matter how the weights $w_1, \ldots, w_m$ are set, the resulting function $\widehat{f}$ has approximation error at least
$$ \sup_{x \in \X_1} \left| \widehat{f}(x) - f(x) \right| \ \geq \ c_d' \cdot \lambda \cdot \frac{1}{m^{1/(d-1)} \log m},$$
where $c_d'$ is some absolute constant depending on $d$.
\label{thm:lower-bound}
\end{thm}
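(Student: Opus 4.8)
The plan is to exploit the fact that, with $k=1$, the approximating function is extremely rigid on the circle $\X_1$: on $\X_1$ it takes only as many distinct values as there are ``active'' rows of $\Theta$, and the low dimensionality of $\X_1$ forces that number to be small. First, observe that when $k=1$ we have $\widehat{f}(x) = w_{j(x)}$ with $j(x) = \argmax_j \theta_j \cdot x$. Parametrize $\X_1$ by $\phi \mapsto x_\phi = (\cos\phi,\sin\phi,0,\dots,0)$ and let $p_j \in \R^2$ be the projection of $\theta_j$ onto its first two coordinates, so that $\theta_j \cdot x_\phi = \langle p_j, (\cos\phi,\sin\phi)\rangle$. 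Then for $x_\phi \in \X_1$ the index $j(x_\phi)$ is precisely the vertex of the planar polygon $\mathrm{conv}\{p_1,\dots,p_m\}$ that is extreme in direction $(\cos\phi,\sin\phi)$; as $\phi$ sweeps $[0,2\pi)$ this index runs once, cyclically, through the hull vertices, each over a single arc (its normal cone). Hence $\widehat{f}$ restricted to $\X_1$ is piecewise constant with exactly $N$ pieces, where $N$ is the number of vertices of $\mathrm{conv}\{p_j\}$ (almost surely there are no ties or other degeneracies to worry about), and this holds regardless of the weights.

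Next I would bound $N$. If $p_j$ is a hull vertex, then $\theta_j$ is the nearest of all the rows to some point $x^\ast \in \X_1$. As in the proof of Lemma~\ref{lemma:C-diam-bound}, a net argument over $\X_1$ shows that with probability at least $99/100$ every point of $\X_1$ has some $\theta_i$ within distance $R := c_d (\log m / m)^{1/(d-1)}$; on that event every active $\theta_j$ lies in the radius-$R$ tube around $\X_1$. Since $\X_1$ is a one-dimensional great circle in $S^{d-1}$ (a submanifold of codimension $d-2$), this tube has $\nu$-measure of order $R^{d-2}$, so the expected number of rows in it is $O(m R^{d-2}) = O\bigl(m^{1/(d-1)} (\log m)^{(d-2)/(d-1)}\bigr)$. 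By Markov's inequality together with a union bound over the two failure events, with probability at least $1/2$ the number of active rows, hence $N$, is at most $N_0 := C_d\, m^{1/(d-1)} (\log m)^{(d-2)/(d-1)}$ for a suitable constant $C_d$.

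Finally, fix $f$ (before $\Theta$) to be a triangular wave on $\X_1$ with slopes $\pm\lambda$ and $N_0+1$ periods; it is $\lambda$-Lipschitz, and on any sub-arc of length at least one period its oscillation equals $\Delta := \lambda\pi/(N_0+1)$. On the event of the previous paragraph $\widehat{f}|_{\X_1}$ has at most $N_0$ pieces, so its widest piece has length at least $2\pi/N_0$, which strictly exceeds one period; on that piece $f$ oscillates by $\Delta$ while $\widehat{f}$ is constant, so for every choice of weights $\sup_{x \in \X_1} |\widehat{f}(x) - f(x)| \geq \Delta/2$. Since $(\log m)^{(d-2)/(d-1)} \leq \log m$, this is at least $c_d' \lambda / (m^{1/(d-1)} \log m)$, which is the claimed bound.

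The main obstacle is the middle step. The first step is essentially a tautology once the convex-hull description is noticed, and the last step is a one-line pigeonhole. Controlling $N$, by contrast, needs two quantitative geometric estimates — the covering radius of $\X_1$ under $\nu$ (the source of the extra $\log m$) and the estimate $\nu(\text{tube of radius } R \text{ around } \X_1) \asymp R^{d-2}$ for the great circle — together with enough slack in the constants that the two failure probabilities still leave probability at least $1/2$. A secondary point is to confirm that the convex-hull/tie-breaking picture in the first step is genuinely degenerate-free almost surely, so that the number of constancy arcs of $\widehat{f}|_{\X_1}$ really equals the number of hull vertices.
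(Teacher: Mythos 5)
Your proof is correct and follows essentially the same strategy as the paper's: bound the number of $\theta_j$'s that are ever the nearest neighbor of a point on $\X_1$ by confining them (with high probability) to a thin tube around $\X_1$, then pigeonhole to find a long arc on which $\widehat{f}$ is constant, and use Lipschitz variation of $f$ to force a pointwise error. The two quantitative estimates you flag as the crux — a covering radius of $\X_1$ of order $(\log m/m)^{1/(d-1)}$ from a ball-concentration argument, and $\nu(\text{tube of radius } r) \asymp r^{d-2}$ — are exactly the paper's Lemma~\ref{lemma:C-locality} and Lemma~\ref{lemma:spherical-caps}(b), so the quantitative content coincides.

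Two small presentational differences are worth noting. First, your convex-hull observation (that $j(x_\phi)$ runs through the vertices of $\mathrm{conv}\{p_1,\dots,p_m\}$ as $\phi$ sweeps the circle, each over one normal cone) is a cleaner and more explicit justification that $\widehat{f}|_{\X_1}$ is piecewise constant with one arc per active index than the paper's terse appeal to convexity of Voronoi cells; this is a genuine improvement in rigor, though it amounts to the same geometry. Second, you build $f$ as a triangular wave with $N_0+1$ periods where $N_0$ itself depends on $m$; the paper uses a single fixed triangle and argues that the longest arc must capture $\Omega(\lambda)$ times its length of variation. Both satisfy the theorem as stated (where $m$ is implicitly fixed before the existential over $f$), but the paper's construction has the modest advantage of exhibiting a single $f$ that works uniformly in $m$, and it is also a bit simpler since no period-counting is needed — once you know one arc is long, any fixed function with slope of order $\lambda$ will do.
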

Thus, picking an expansion map that is entirely oblivious to the data distribution and using winner-take-all sparsification does yield a universal approximation guarantee, but is not adaptive to intrinsic low-dimensional structure. In cases where the data lie near a low-dimensional manifold, only a tiny fraction of the expanded representation is ever used. 

We will see, however, that a slight change in the method of sparsification gives much better results.

\section{An alternative sparsification scheme}
\label{sec:k-thresholding}

We consider an alternative process in which each $z_j$ has its own threshold $\tau_j$, chosen so that $z_j$ fires a fraction $k/m$ of the time. We call this {\it $k$-thresholding}. More precisely, for $j = 1, \ldots, m$, let $y_j = \theta_j \cdot x$ as before (where $\theta_j$ is the $j$th row of random matrix $\Theta$), and then set
$$ z_j =
\left\{
\begin{array}{ll}
1 & \mbox{if $y_j \geq \tau_j$} \\
0 & \mbox{otherwise}
\end{array}
\right.
$$
Here $\tau_j = \tau(\theta_j)$, where the function $\tau: \R^d \rightarrow \R$ is defined by
\begin{equation}
\pr_{x \sim \mu} \left( \theta \cdot x \geq \tau(\theta) \right) = \frac{k}{m} .
\label{eq:tau}
\end{equation}
(Recall that $\mu$ is the distribution over the inputs $x$.) It might not be possible to achieve this equality if, for instance, $x$ is discrete; in that case we ask for the supremum over $\tau$-values for which the left-hand side is at least $k/m$. However, we will ignore this technicality and assume that the equality above is achieved. By linearity of expectation,
$\E_{x \sim \mu} [\#(\mbox{ones in $z$})] = k$.
Thus, the $z$-vectors produced in this way are $k$-sparse {\it in expectation}.

Recall from the discussion in Section~\ref{sec:not-adaptive} that winner-take-all sparsification can waste directions $\theta_j$, with some $z_j$ never firing at all. On the other hand, $k$-thresholding ensures that all $z_j$ are used.

\subsection{Response regions}

As before, we approximate a function $f: \X \rightarrow \R$ by a linear function of $z$, defined by weights $w_1, \ldots, w_m$. We will take this approximation to be the average over all $w_j$ for which $z_j$ is firing,
\begin{equation}
\widehat{f}(x) = \frac{\sum_j w_j z_j}{\sum_j z_j} ,
\label{eq:f-hat-thresholding}
\end{equation}
whenever the denominator is positive. Our analysis isn't too finicky about the precise form of this function; for instance, our bounds hold if we instead return {\it any} $w_j$ for which $z_j = 1$.

Once again, we take $C_j$ to be the set of points $x \in \X$ that cause $z_j$ to fire. Thus $C_j = C(\theta_j)$, where
\begin{equation}
C(\theta) = \{x \in \X: \theta \cdot x \geq \tau(\theta)\} ,
\label{eq:response-region}
\end{equation}
for $\tau(\cdot)$ as defined in (\ref{eq:tau}). 

With winner-take-all sparsification, we showed that the regions $C_j, 1 \leq j \leq m$, are necessarily local and thus encompass narrow ranges of $f$-values. Under thresholding, locality is no longer assured. Instead, we identify a subset of {\it good} regions that are local and show that there are enough of these. Such regions will be assigned weight $w_j = f(C_j)$, while other regions will get weight zero.

By copying the proof of Lemma~\ref{lemma:wta-approx-bd}, we get the following counterpart.
\begin{lemma}
Under the $k$-thresholding scheme, if $f$ is $\lambda$-Lipschitz, then for all $x \in C_1 \cup \cdots \cup C_m$,
$$ \left| \widehat{f}(x) - f(x) \right| \leq \lambda \cdot \max_{j: x \in C_j} \diam(C_j) .$$
\label{lemma:threshold-approx-bd}
\end{lemma}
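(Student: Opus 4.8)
The plan is to mimic the proof of Lemma~\ref{lemma:wta-approx-bd} almost verbatim, taking $w_j = f(C_j)$ for every $j$ (as in the winner-take-all analysis); the only new wrinkle is that the number of firing coordinates on a given input is now a data-dependent quantity rather than exactly $k$. Fix any $x \in C_1 \cup \cdots \cup C_m$ and let $J = \{j : x \in C_j\}$ be the (nonempty) set of coordinates that fire on $x$. Then $\sum_j z_j = |J| \geq 1$, so $\widehat{f}(x)$ from (\ref{eq:f-hat-thresholding}) is well-defined and equals $\frac{1}{|J|}\sum_{j \in J} f(C_j)$; in words, it is the uniform average of the cell-averages $f(C_j)$ over the cells that contain $x$. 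Each $f(C_j)$ is itself well-defined since $\mu(C_j) = k/m > 0$ by the definition (\ref{eq:tau}) of the threshold $\tau(\theta_j)$.

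The one substantive step is to bound $|f(C_j) - f(x)|$ for $j \in J$. Since $x \in C_j$ and $C_j \subseteq \X$, the $\lambda$-Lipschitz property gives $|f(y) - f(x)| \leq \lambda \|y - x\| \leq \lambda\,\diam(C_j)$ for every $y \in C_j$; averaging this over $y$ with respect to $\mu$ restricted to $C_j$ yields $|f(C_j) - f(x)| \leq \lambda\,\diam(C_j) \leq \lambda \max_{i \in J}\diam(C_i)$. Since $\widehat{f}(x)$ is a convex combination of the numbers $\{f(C_j) : j \in J\}$, the triangle inequality then gives $|\widehat{f}(x) - f(x)| \leq \frac{1}{|J|}\sum_{j \in J}|f(C_j) - f(x)| \leq \lambda \max_{j \in J}\diam(C_j)$, which is the claimed inequality.

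There is essentially no obstacle here: the argument uses only that $\widehat{f}(x)$ is a convex combination of cell-averages $f(C_j)$ over cells $C_j \ni x$, which is also why the bound is insensitive to the precise aggregation rule — returning any single firing $w_j = f(C_j)$, or any other convex combination of the firing values, obeys the same inequality, since each such value lies within $\lambda\,\diam(C_j) \le \lambda\max_{j: x \in C_j}\diam(C_j)$ of $f(x)$. The only point worth flagging is that, in contrast with $k$-WTA (where exactly $k$ coordinates fire on every input), the denominator $\sum_j z_j$ may now vanish; this is exactly why the statement is restricted to $x \in C_1 \cup \cdots \cup C_m$, and controlling the inputs that lie in no cell at all is deferred to the downstream analysis.
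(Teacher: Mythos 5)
Your proof is correct and matches the paper's own approach: the paper explicitly says the lemma follows ``by copying the proof of Lemma~\ref{lemma:wta-approx-bd},'' and your argument is precisely that copy, with the only modification being the replacement of the fixed denominator $k$ by the data-dependent count $|J|$, turning the estimate into a general convex combination of cell averages. Your observations that $f(C_j)$ is well-defined because $\mu(C_j) = k/m > 0$ by the definition of $\tau(\theta_j)$, and that the restriction to $x \in C_1 \cup \cdots \cup C_m$ is exactly what guards against a vanishing denominator, are correct elaborations of details the paper leaves implicit.
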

We will see that when the input space $\X$ lies on a low-dimensional manifold in $\R^d$, the diameters of the response regions $C_j$ can be bounded in terms of the manifold dimension $d_o$. 

\subsection{The manifold assumption}

We now identify the input space $\X$ with a compact $d_o$-dimensional Riemannian submanifold $M$ of $\R^d$ that happens to be contained in the unit sphere, that is, $M \subset S^{d-1}$. We will assume that $M$ has nice boundaries and that the distribution on it, $\mu$, is almost-uniform: formally, there exist constants $c_1, c_2, c_3 > 0$ such that for all $x \in M$ and for all $r \leq r_o$ (where $r_o$ is some absolute constant),
\begin{gather}
c_1 r^{d_o} \leq \mu(B_M(x,r)) < c_2 r^{d_o} \label{eq:almost-uniform-mu} \\
\mbox{vol}(B_M(x,r)) \geq c_3 r^{d_o} .
\label{eq:volume-on-manifold}
\end{gather}
Here $B(x,r)$ is the open Euclidean ball of radius $r$ centered at $x$, and $B_M(x,r) = B(x,r) \cap M$.

In order to analyze data distributions supported on a manifold, it is necessary to impose conditions on the curvature. We adopt the common requirement that $M$ has positive {\it reach} $\rho > 0$: that is, every point in an open tubular neighborhood of $M$ of radius $\rho$ has a unique nearest neighbor in $M$~\cite{NSW06}. For instance, a $d$-dimensional sphere of radius $r$ has reach $r$ since any point in $\R^d$ at distance $< r$ from the sphere has a unique nearest point---that is, a unique projection---on the sphere. 

We can relate the reach to normal bundles on $M$. For any $x \in M$, let $N(x)$ denote the $(d-d_o)$-dimensional subspace of normal vectors to the tangent plane at $x$. Then the sets $\Gamma_\rho(x) = \{x + r u \in \R^d: u \in N(x), \|u\|=1, 0 < r < \rho\}$, over $x \in M$, are disjoint. Denote their union by $\Gamma_\rho$, and let $\pi_M: \Gamma_\rho \rightarrow M$ be the projection map that sends any point in $\Gamma_\rho(x)$ to $x$, its nearest neighbor in $M$.

Suppose, for example, that $M$ is the set from (\ref{eq:circle}), a unit circle within the $d$-dimensional sphere: $M = S^2 \times \{0^{d-2}\}$. Pick any $x = (x_1, x_2, 0, \ldots, 0) \in M$, with $x_1^2 + x_2^2 = 1$. The hyperplane normal to $M$ at $x$ can be written $N = \{z \in \R^d: z_1 x_2 - z_2 x_1 = 0\}$. Then, any point $z \in N$ at distance $<1$ from $x$ has $x$ as its nearest neighbor in $M$. The set of all these points is $\Gamma_1(x)$, and this manifold $M$ has reach 1.

\subsection{Bounding cell diameters}

Matrix $\Theta$ has rows sampled from a distribution $\nu$. In what follows, we take $\nu$ to be the multivariate Gaussian $N(0, \sigma^2 I_d)$, for any $\sigma > 0$. Because of concentration effects, this is rather like a uniform distribution over a sphere of radius $\sigma \sqrt{d}$, but assigns non-zero density to every point in $\R^d$.

Let $\theta_1, \ldots, \theta_m$ be the rows of matrix $\Theta$. We call $\theta \in \R^d$ {\it good} if it lies in $\Gamma_{\rho/2}$. As we will see, good $\theta_j$'s are close enough to the manifold $M$ that they are guaranteed to be activated by a single neighborhood of $M$. This makes them useful in obtaining an accurate approximation to the target function $f$. The remaining (non-good) $\theta_j$'s can potentially be activated by different regions of $M$ in which $f$ behaves differently; to prevent them from corrupting the approximation, we set the corresponding weights to zero.

Recalling the definition (\ref{eq:response-region}) of response regions $C_j$, define weights $w_1, \ldots, w_m$ by 
$$
w_j =
\left\{
\begin{array}{ll}
f(C_j) & \mbox{if $\theta_j$ is good} \\
0 & \mbox{otherwise .}
\end{array}
\right.
$$
The following lemma captures the intuition that good $\theta$'s have a local response region.
\begin{lemma}
Pick any good $\theta \in \R^d$. Define $\Delta = \|\theta - \pi_M(\theta)\|$ to be the distance from $\theta$ to its projection in $M$. Let $C(\theta)$ be the response region associated with $\theta$, as defined in (\ref{eq:response-region}). Then
$$ B_M\left(\pi_M(\theta), \sqrt{\frac{\rho - \Delta}{\rho + \Delta}}\left(\frac{k}{c_2 m} \right)^{1/d_o} \right) \ \subset \  
C(\theta) \ \subset \ B_M\left(\pi_M(\theta),  \sqrt{\frac{\rho + \Delta}{\rho - \Delta}} \left(\frac{k}{c_1 m} \right)^{1/d_o} \right) $$
provided $(k/(c_1m))^{1/d_o} < \min(\rho, r_o)$.
\label{lemma:good-theta}
\end{lemma}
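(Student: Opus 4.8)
The plan is to reduce the claim to a purely metric statement about how the response region $C(\theta)$ sits on $M$ around $\pi_M(\theta)$, and then to use the mass constraint $\mu(C(\theta))=k/m$ together with the almost-uniformity assumption (\ref{eq:almost-uniform-mu}) to pin down its size. First I would rewrite the defining inequality of $C(\theta)$ as a ball condition: every $x\in M$ has $\|x\|=1$, so $\|\theta-x\|^2=\|\theta\|^2+1-2\,\theta\cdot x$, and hence $\theta\cdot x\ge\tau(\theta)$ is equivalent to $\|\theta-x\|\le r_\theta$ with $r_\theta:=(\|\theta\|^2+1-2\tau(\theta))^{1/2}$ (which is real since $\tau(\theta)\le\|\theta\|$). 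Thus $C(\theta)=\overline B(\theta,r_\theta)\cap M$, and since $\mu(C(\theta))=k/m>0$ the region is nonempty, so $r_\theta\ge d(\theta,M)=\Delta$; write $a:=r_\theta^2-\Delta^2\ge 0$.

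The geometric heart of the argument is to relate this extrinsic ball, centered at the off-manifold point $\theta$, to a ball centered at $p:=\pi_M(\theta)\in M$. Writing $\theta-p=\Delta u$ with $u$ a unit normal to $M$ at $p$, for any $x\in M$ I would expand
$$\|\theta-x\|^2=\Delta^2+\|p-x\|^2+2\Delta\langle u,p-x\rangle ,$$
and invoke the standard consequence of $M$ having reach at least $\rho$, namely $|\langle u,p-x\rangle|\le\|p-x\|^2/(2\rho)$ for every $x\in M$ (the open ball $B(p+\rho u,\rho)$ misses $M$; cf.\ \cite{NSW06}). Because $\theta$ is good, $\Delta<\rho/2$, so $1-\Delta/\rho$ and $1+\Delta/\rho$ are both positive and the display sandwiches $\|\theta-x\|^2$ between $\Delta^2+(1-\Delta/\rho)\|p-x\|^2$ and $\Delta^2+(1+\Delta/\rho)\|p-x\|^2$. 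Intersecting with $\|\theta-x\|^2\le r_\theta^2$ and solving for $\|p-x\|$ then gives, with $\alpha:=\sqrt{\tfrac{\rho}{\rho+\Delta}\,a}$ and $\beta:=\sqrt{\tfrac{\rho}{\rho-\Delta}\,a}=\alpha\sqrt{\tfrac{\rho+\Delta}{\rho-\Delta}}$,
$$B_M(p,\alpha)\ \subset\ C(\theta)\ \subset\ B_M(p,\beta) .$$
Note that this uses only the global reach estimate, so no connectedness of $C(\theta)$ is needed.

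Finally I would apply $\mu$ to this chain and use (\ref{eq:almost-uniform-mu}): $c_1\alpha^{d_o}\le\mu(B_M(p,\alpha))\le k/m\le\mu(B_M(p,\beta))<c_2\beta^{d_o}$. The left inequality gives $\alpha\le(k/(c_1m))^{1/d_o}$, whence $\beta=\alpha\sqrt{(\rho+\Delta)/(\rho-\Delta)}\le\sqrt{(\rho+\Delta)/(\rho-\Delta)}\,(k/(c_1m))^{1/d_o}$, the claimed outer radius; the right inequality gives $\beta>(k/(c_2m))^{1/d_o}$, whence $\alpha=\beta\sqrt{(\rho-\Delta)/(\rho+\Delta)}>\sqrt{(\rho-\Delta)/(\rho+\Delta)}\,(k/(c_2m))^{1/d_o}$, the claimed inner radius. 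Feeding these radii back into the two inclusions, and using monotonicity of $r\mapsto B_M(p,r)$, yields exactly the stated sandwich.

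The one point requiring care — and the reason for the hypothesis $(k/(c_1m))^{1/d_o}<\min(\rho,r_o)$ — is that (\ref{eq:almost-uniform-mu}) is assumed only for radii at most $r_o$ (and the curvature estimate is cleanest for distances below the reach), so before invoking the measure bounds I must verify that $\alpha$ and $\beta$ are that small. I expect this to be the only real obstacle, and it is handled by a short bootstrapping step: since $B_M(p,\alpha)\subset C(\theta)$, if $\alpha$ exceeded $r_o$ then already $\mu(B_M(p,r_o))\ge c_1 r_o^{d_o}>k/m=\mu(C(\theta))$, a contradiction; and the distortion factor $\sqrt{(\rho+\Delta)/(\rho-\Delta)}$ is bounded by $\sqrt 3$ because $\Delta<\rho/2$, so $\beta$ stays within a constant factor of $r_o$ as well (absorbed into the ambient constants). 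Everything else is the elementary algebra above together with the off-the-shelf reach estimate.
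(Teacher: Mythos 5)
Your proof is correct and leads to exactly the containments stated, but the way you obtain the key geometric distortion estimate is worth contrasting with the paper's. The paper applies the law of cosines on the two tangent circles of radius $\rho$ touching $M$ at $p := \pi_M(\theta)$, producing the worst-case identities $\|\theta-x'\|^2 = \Delta^2 + r^2(\rho+\Delta)/\rho$ and $\|x-x'\|^2 = ((r')^2-\Delta^2)\rho/(\rho-\Delta)$, which yield the containments (\ref{eq:x-to-theta}) and (\ref{eq:theta-to-x}). You instead invoke the standard reach inequality $|\langle u, p-x\rangle| \le \|p-x\|^2/(2\rho)$, which gives the two-sided sandwich $\Delta^2 + (1-\Delta/\rho)\|p-x\|^2 \le \|\theta-x\|^2 \le \Delta^2 + (1+\Delta/\rho)\|p-x\|^2$ in one stroke; the resulting quadratic bounds are identical, so this is a cleaner derivation of the same estimates. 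You also reorganize the use of (\ref{eq:almost-uniform-mu}): you first pin $C(\theta)$ between concentric balls $B_M(p,\alpha) \subset C(\theta) \subset B_M(p,\beta)$ with $\alpha,\beta$ tied to the unknown level-set radius $r_\theta$, and only afterward bound $\alpha$ and $\beta$ using $\mu(C(\theta)) = k/m$; the paper instead guesses the final radii and verifies each containment separately. Your version makes the role of the mass constraint more transparent. One small repair: the remark that the distortion factor is "absorbed into the ambient constants" when $\beta$ might exceed $r_o$ is not quite the right fix, since (\ref{eq:almost-uniform-mu}) simply does not apply at radius $\beta > r_o$. But the conclusion you need, $\beta > (k/(c_2 m))^{1/d_o}$, is trivially true in that case because $\beta > r_o > (k/(c_1 m))^{1/d_o} \ge (k/(c_2 m))^{1/d_o}$ (as $c_1 \le c_2$), so the proof goes through once this is stated.
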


Since any good $\theta$ has distance $\Delta < \rho/2$ to the manifold, we immediately get the bound
$$ \diam(C(\theta)) \leq 4 \left(\frac{k}{c_1 m} \right)^{1/d_o} $$
provided $(k/(c_1m))^{1/d_o} < \min(\rho, r_o)$.
Given Lemma~\ref{lemma:threshold-approx-bd}, it then remains to be shown that for every $x \in M$, there is some good $\theta_j$ such that $x$ lies in $C(\theta_j)$. That is, the good cells cover all of $M$.

\begin{lemma}
Pick $\theta_1, \ldots, \theta_m \sim \nu$. There is a constant $c_d'$, depending on $d$, for which the following holds. Pick any $0 < \delta < 1$. Set $k \geq c_d' \ln (m/\delta)$. Then with probability at least $1 - \delta$ over the choice of $\theta_j$'s: for every $x \in M$ there is a good $\theta_j$ with $x \in C(\theta_j)$.
\label{lemma:good-theta-cover}
\end{lemma}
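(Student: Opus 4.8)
The plan is a covering argument adapted to the manifold: cover $M$ by a fine net, show that with high probability every net point is ``served'' by some good $\theta_j$ (meaning the point, together with a ball around it, lies in $C(\theta_j)$), and conclude by a union bound. Throughout I assume the condition $(k/(c_1m))^{1/d_o}<\min(\rho,r_o)$ under which Lemma~\ref{lemma:good-theta} applies; since $c_1<c_2$ (from (\ref{eq:almost-uniform-mu})) this also forces the scale $r:=\tfrac{1}{2\sqrt{3}}(k/(c_2m))^{1/d_o}$ used below to satisfy $r<\min(\rho,r_o)$.

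First I would take $x_1,\dots,x_N\in M$ to be a maximal $r$-separated subset of $M$ in Euclidean distance, which is automatically an $r$-cover of $M$. The balls $B_M(x_i,r/2)$ are then pairwise disjoint and contained in $M$, and by (\ref{eq:volume-on-manifold}) each has volume at least $c_3(r/2)^{d_o}$; hence $N\le \vol(M)/\big(c_3(r/2)^{d_o}\big)$, which by the choice of $r$ is at most $C\,m/k\le Cm$ for a constant $C$ depending on $d_o$, $\vol(M)$, and the $c_i$.

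Next, for each $i$ I would define the target set $T_i=\{\theta\in\Gamma_{\rho/2}:\|\pi_M(\theta)-x_i\|\le r\}$ of good directions whose projection is near $x_i$, and prove two facts. (a) If some $\theta_j\in T_i$, then $B_M(x_i,r)\subseteq C(\theta_j)$: writing $\Delta_j=\|\theta_j-\pi_M(\theta_j)\|<\rho/2$, Lemma~\ref{lemma:good-theta} gives $C(\theta_j)\supseteq B_M\!\big(\pi_M(\theta_j),\ \sqrt{\tfrac{\rho-\Delta_j}{\rho+\Delta_j}}\,(k/(c_2m))^{1/d_o}\big)\supseteq B_M(\pi_M(\theta_j),2r)$, since $\sqrt{(\rho-\Delta_j)/(\rho+\Delta_j)}>1/\sqrt{3}$ and $\tfrac{1}{\sqrt{3}}(k/(c_2m))^{1/d_o}=2r$; combined with $\|\pi_M(\theta_j)-x_i\|\le r$ this yields $B_M(x_i,r)\subseteq B_M(\pi_M(\theta_j),2r)\subseteq C(\theta_j)$. (b) $\nu(T_i)\ge c'\,k/m$ for a constant $c'>0$: here I would note that $T_i$ contains the tube $\{x+v:x\in B_M(x_i,r),\ v\in N(x),\ 0<\|v\|<\rho/2\}$, since each such point lies in $\Gamma_{\rho/2}(x)$ by definition (hence is good) and is sent by $\pi_M$ to $x\in B_M(x_i,r)$. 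By the curvature control implied by reach $\rho$ (principal curvatures at most $1/\rho$; see \cite{NSW06}), the map $(x,v)\mapsto x+v$ is injective on this region with Jacobian at least $(1-\|v\|/\rho)^{d_o}\ge 2^{-d_o}$, so the region has Lebesgue volume at least $2^{-d_o}\,\vol(B_M(x_i,r))\cdot\omega_{d-d_o}(\rho/2)^{d-d_o}$, where $\omega_{d-d_o}$ is the volume of the unit ball in $\R^{d-d_o}$; by (\ref{eq:volume-on-manifold}) this is at least $2^{-d_o}\omega_{d-d_o}c_3(\rho/2)^{d-d_o}\,r^{d_o}$. Since the region lies in $B(0,1+\rho/2)$ (because $M\subset S^{d-1}$), on which the $N(0,\sigma^2I_d)$ density is bounded below by a positive constant, we get $\nu(T_i)\ge(\text{const})\cdot r^{d_o}=c'\,k/m$ by the choice of $r$.

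Finally, $\pr(\text{no }\theta_j\in T_i)=(1-\nu(T_i))^m\le e^{-c'k}$, so a union bound over the $N\le Cm$ net points gives failure probability at most $Cm\,e^{-c'k}$, which is below $\delta$ once $k\ge c_d'\ln(m/\delta)$ with $c_d'$ a sufficiently large constant depending only on $d$ (through $d_o$, $\sigma$, $\vol(M)$, and the $c_i$). On the complementary event, every $x\in M$ lies in some $B_M(x_i,r)$, there is a good $\theta_j\in T_i$, and then $x\in B_M(x_i,r)\subseteq C(\theta_j)$, as required. The step I expect to be most delicate is the bound $\nu(T_i)\gtrsim k/m$ in (b): estimating $\nu(T_i)$ with a small Euclidean ball around $x_i$ would only give $\nu(T_i)\gtrsim(k/m)^{d/d_o}$, far too weak to produce a logarithmic threshold on $k$; recovering the correct exponent $d_o$ requires exploiting the full $(d-d_o)$-dimensional thickness of the tube of good directions projecting near $x_i$, hence the curvature control coming from the positive reach of $M$.
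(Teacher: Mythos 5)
Your proposal is correct and follows essentially the same route as the paper's proof: a fine cover of $M$, a lower bound on the $\nu$-mass of the tube of good $\theta$'s projecting near each net point (your $T_i$ is the paper's $A(x,r)$), an application of Lemma~\ref{lemma:good-theta} to conclude $x\in C(\theta_j)$, and a union bound. The one substantive difference is that where the paper invokes a cited result of~\cite{MMS16} for the tube-volume lower bound, you rederive it from the Jacobian estimate for the tubular-neighborhood map under the reach condition; you also bound the cover size via the volume condition~(\ref{eq:volume-on-manifold}) rather than the mass condition~(\ref{eq:almost-uniform-mu}), which introduces a benign dependence on $\vol(M)$ into $c_d'$.
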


Putting these pieces together yields the following approximation result.
\begin{thm}
Suppose the data distribution is supported on a $d_o$-dimensional submanifold $M$ of $R^d$ with reach $\rho > 0$, that additionally satisfies conditions (\ref{eq:almost-uniform-mu}) and (\ref{eq:volume-on-manifold}). Suppose also that the rows of matrix $\Theta$ are chosen from $N(0, \sigma^2 I_d)$, for some $\sigma > 0$. There is a constant $c_d'$, depending on the dimension $d$, for which the following holds. Pick $0 < \delta < 1$. Let $k$ and $m$ be chosen so that $k \geq c_d' \ln (m/\delta)$ and $(k/(c_1m))^{1/d_o} \leq \min(\rho, r_o)$. Then with probability at least $1-\delta$ over the choice of $\Theta$, we have that the approximating function $\widehat{f}$ of (\ref{eq:f-hat-thresholding}) satisfies
$$ \sup_{x \in \X} \left| \widehat{f}(x) - f(x) \right| \leq 4 \lambda \left( \frac{k}{c_1 m}\right)^{1/d_o} .$$
\label{thm:manifold-approx}
\end{thm}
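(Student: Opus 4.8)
The plan is to assemble Theorem~\ref{thm:manifold-approx} from the three lemmas in this section. First I would invoke Lemma~\ref{lemma:good-theta-cover}: since $k \geq c_d' \ln(m/\delta)$, with probability at least $1-\delta$ over the choice of $\Theta$, every $x \in M$ lies in $C(\theta_j)$ for some \emph{good} $\theta_j$. Work on this event for the remainder of the argument. Fix an arbitrary $x \in \X = M$ and let $\theta_j$ be a good row with $x \in C_j = C(\theta_j)$; such a $j$ exists by the covering event, and for such $j$ we have $w_j = f(C_j)$ by the definition of the weights (good rows get the average-of-$f$ weight). Since the denominator $\sum_j z_j$ in (\ref{eq:f-hat-thresholding}) is at least $1$ (because $z_j = 1$), the function $\widehat{f}(x)$ is well-defined at every $x \in M$.

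Next I would control the error at $x$ using Lemma~\ref{lemma:threshold-approx-bd}, which gives $|\widehat{f}(x) - f(x)| \leq \lambda \cdot \max_{j: x \in C_j} \diam(C_j)$. The subtlety is that this maximum ranges over \emph{all} $j$ with $x \in C_j$, including possibly non-good ones whose cells could be large. However, the weights on non-good cells are set to zero, so one should re-examine how Lemma~\ref{lemma:threshold-approx-bd} is applied here: because $\widehat{f}(x)$ is a weighted average of the $w_j$ over firing coordinates, and the firing coordinates with nonzero weight are exactly the good ones containing $x$, the relevant bound is $|\widehat{f}(x) - f(x)| \leq \lambda \cdot \max\{\diam(C_j) : \theta_j \text{ good}, x \in C_j\}$. (This is the version of Lemma~\ref{lemma:threshold-approx-bd} one actually needs, since $\widehat{f}(x)$ is an average of values $f(C_j)$ over good firing cells, each of which is within $\lambda \diam(C_j)$ of $f(x)$ by the Lipschitz bound — I would make sure the statement of Lemma~\ref{lemma:threshold-approx-bd} is read with the zeroed weights in mind, or re-derive the one-line averaging estimate.) Now for each good $\theta_j$, the remark following Lemma~\ref{lemma:good-theta} — which itself follows from the upper inclusion $C(\theta) \subset B_M(\pi_M(\theta), \sqrt{(\rho+\Delta)/(\rho-\Delta)}(k/(c_1 m))^{1/d_o})$ together with $\Delta < \rho/2$, so $\sqrt{(\rho+\Delta)/(\rho-\Delta)} < \sqrt{3} < 2$, giving a ball of radius $< 2(k/(c_1 m))^{1/d_o}$ and hence diameter $\leq 4(k/(c_1 m))^{1/d_o}$ — applies, provided $(k/(c_1 m))^{1/d_o} < \min(\rho, r_o)$, which is exactly the hypothesis $(k/(c_1 m))^{1/d_o} \leq \min(\rho, r_o)$ (one may need to take this strict, or absorb the boundary case into the constants). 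Combining, $|\widehat{f}(x) - f(x)| \leq 4\lambda (k/(c_1 m))^{1/d_o}$ for this fixed $x$.

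Finally I would take the supremum over $x \in M$: the bound just derived holds for every $x \in M$ simultaneously on the single good event from Lemma~\ref{lemma:good-theta-cover} (the diameter bound from Lemma~\ref{lemma:good-theta} is deterministic given that $\theta_j$ is good, so no further union bound is needed), yielding $\sup_{x \in \X} |\widehat{f}(x) - f(x)| \leq 4\lambda(k/(c_1 m))^{1/d_o}$ with probability at least $1-\delta$. The main obstacle, and the place I would be most careful, is the bookkeeping around non-good cells: ensuring that Lemma~\ref{lemma:threshold-approx-bd} is applied to a quantity that only sees the good cells (equivalently, that the averaging in (\ref{eq:f-hat-thresholding}) does not let a large non-good cell leak into the error), and confirming that the covering guarantee of Lemma~\ref{lemma:good-theta-cover} and the locality guarantee of Lemma~\ref{lemma:good-theta} hold on a \emph{common} high-probability event with the stated constant $c_d'$. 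Everything else is a direct substitution of the lemma bounds.
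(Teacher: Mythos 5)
Your proposal assembles the theorem from exactly the three ingredients the paper intends (Lemmas~\ref{lemma:threshold-approx-bd}, \ref{lemma:good-theta}, and \ref{lemma:good-theta-cover}); the paper gives no separate proof section and simply writes ``Putting these pieces together,'' so your route is the same. You are also right that the covering event of Lemma~\ref{lemma:good-theta-cover} is the \emph{only} probabilistic event needed, and right that the factor $\sqrt{(\rho+\Delta)/(\rho-\Delta)} < \sqrt{3} < 2$ (from $\Delta < \rho/2$) gives the diameter bound $4(k/(c_1m))^{1/d_o}$.

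The one place your reasoning does not quite close is the step where you assert that ``because the firing coordinates with nonzero weight are exactly the good ones containing $x$, the relevant bound is $|\widehat{f}(x)-f(x)|\le\lambda\max\{\diam(C_j):\theta_j\text{ good},\,x\in C_j\}$.'' That does not follow from~(\ref{eq:f-hat-thresholding}) as written. With the weights from Section~4.3, the numerator sums $f(C_j)$ over \emph{good} firing cells while the denominator $\sum_j z_j$ counts \emph{all} firing cells; so if a fraction $\alpha$ of the firing cells are good, you get $\widehat{f}(x)=\alpha\cdot(\text{average of }f(C_j)\text{ over good firing }j)$, which can miss $f(x)$ by roughly $(1-\alpha)|f(x)|$ --- not small just because good cells have small diameter. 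The fix, which I believe is what the paper implicitly intends via its ``isn't too finicky'' remark, is to redefine $\widehat{f}(x)$ to average (or simply pick one $f(C_j)$) over good firing cells only, with Lemma~\ref{lemma:good-theta-cover} guaranteeing this set is nonempty for every $x$. You do flag this bookkeeping as the main point of care, but your phrasing (``does not let a large non-good cell leak into the error'') suggests the worry is about large diameters leaking in through nonzero weights, whereas the actual problem is the zero weights diluting the denominator. With the definition adjusted to average over good cells, the rest of your argument is correct and matches the paper.
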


Unlike Theorem~\ref{thm:general-rate}, which holds for arbitrary $k$, here we require $k \gg \log m$. With winner-take-all sparsification, every point $x \in M$ necessarily activates $k$ of the $z$-units, but the danger is that some of these $z$-units might never be used. With $k$-thresholding, every $z$-unit does get used, but there is a new danger, that some points $x \in M$ might not activate any of them. It is to banish this possibility that the lower bound on $k$ is needed.

\section{A data-dependent expansion mapping}

To this point, we have considered settings where the rows of $\Theta$ are chosen without regard for the underlying data. While $k$-thresholding was shown to be adaptive to manifold structure, we also saw that $k$-winner-take-all sparsification is not.

In this section, we turn our attention back to winner-take-all, but under the assumption that the rows of the matrix $\Theta$ are selected in a data-dependent fashion. Specifically, we will look at settings where $\theta_1, \ldots, \theta_m$ are drawn i.i.d. from a distribution $\nu$ whose support satisfies
$\text{supp}(\nu) = M = \text{supp}(\mu)$, where $M$ is a smooth $d_o$-dimensional manifold. (Recall that $\mu$ is the distribution over inputs.) That is, the $\theta$'s take values on the same set as the data. 

The only assumption that we will make on $\nu$ is that it is sufficiently uniform over $M$, i.e., that there exist constants $c_o, r_o > 0$ such that
\begin{equation}
\label{eqn:measure-lower-bound}
\nu(B(x,r)) \geq c_o r^{d_o}
\end{equation}
whenever $r \leq r_o$ for all $x \in M$. Under this assumption, we have the following approximation result. 

\begin{thm}
\label{thm:manifold-distrib-approx}
Let $c_o, r_o >0$ obey equation~\eqref{eqn:measure-lower-bound}. Pick $0 < \delta < 1$, and set $k$ and $m$ so that
\[ \frac{2k}{m} + \frac{d_o \log(8m/k) + \log(4/\delta)}{m} \ \leq \ c_o(4/r_o)^{d_o}. \] 
Then with probability $1-\delta$ over the random choice of $\Theta$, we have that the approximating function $\widehat{f}$ from (\ref{eq:wta-fn}) satisfies the bound
\[ \sup_{x \in M}|f(x) - \widehat{f}(x)| \ \leq \ \frac{8\lambda}{c_o^{1/d_o}} \left( \frac{2k}{m} + \frac{d_o \log(8m/k) + \log(4/\delta)}{m} \right)^{1/d_o} . \]
\end{thm}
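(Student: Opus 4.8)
The plan is to reduce the theorem, through Lemma~\ref{lemma:wta-approx-bd}, to a high-probability upper bound on $\max_j \diam(C_j)$, and then to prove that bound by a covering argument that extracts the intrinsic dimension $d_o$ from the single hypothesis~\eqref{eqn:measure-lower-bound}. Write $\varepsilon := \frac{2k}{m} + \frac{d_o\log(8m/k) + \log(4/\delta)}{m}$ for the quantity in the statement. Because $\nu$ is supported on $M \subset S^{d-1}$, every $\theta_j$ and every input $x$ is a unit vector, so $\theta_i \cdot x \ge \theta_j \cdot x$ if and only if $\|x-\theta_i\| \le \|x-\theta_j\|$; hence $x \in C_j$ precisely when $\theta_j$ is one of the $k$ points of $\theta_1,\dots,\theta_m$ nearest to $x$. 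Lemma~\ref{lemma:wta-approx-bd} is a deterministic statement about any fixed $\Theta$ and applies here verbatim, since the weights are $w_j = f(C_j)$ as in~\eqref{eq:wta-fn}; as exactly $k$ cells contain any given $x$, it yields
\[ \sup_{x \in M} \left| \widehat{f}(x) - f(x) \right| \ \le \ \frac{\lambda}{k}\sum_{j:\, x \in C_j} \diam(C_j) \ \le \ \lambda \max_j \diam(C_j). \]
(Here $f(C_j)$ makes sense because $\theta_j \in M$, the support of $\mu$, forces a neighborhood of $\theta_j$ in $M$ into $C_j$, so $\mu(C_j)>0$.) It therefore suffices to show that, with probability at least $1-\delta$ over $\Theta$, every cell satisfies $\diam(C_j) \le 8\, c_o^{-1/d_o}\varepsilon^{1/d_o}$.

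To bound the cell diameters, note that if $x \in C_j$ then $\theta_j$ is among the $k$ nearest $\theta$'s to $x$, so $\|x - \theta_j\| \le d_k(x)$, where $d_k(x)$ is the distance from $x$ to its $k$-th nearest point among $\theta_1,\dots,\theta_m$. By the triangle inequality, $\diam(C_j) \le 2\sup_{x \in M} d_k(x)$ for every $j$. Moreover $d_k(x) < R$ whenever the open ball $B(x,R)$ contains at least $k$ of the $\theta_i$. So the whole problem reduces to finding the smallest radius $R$ for which, with high probability, \emph{every} ball of radius $R$ centered at a point of $M$ captures at least $k$ of the $\theta_i$; that $R$ then serves (up to a constant) as the cell-diameter bound. (If $m < k$ this step is vacuous, but then every nonempty cell equals $M$, of diameter at most $2$, which is below the claimed bound.)

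This ball-mass step is the crux, and the point at which the manifold hypothesis must genuinely be used. A naive uniform bound over Euclidean balls invokes their VC dimension and costs a factor of $d$; and this cannot be repaired merely by restricting the balls to $M$, since a ball can cut a curved submanifold into arbitrarily many pieces. The right observation is that one never needs control over \emph{all} balls --- only that the random $\theta_i$ populate balls anchored on $M$ --- and that assumption~\eqref{eqn:measure-lower-bound} governs precisely that, at the correct scale. Concretely: fix a scale $s \le r_o$ and let $x_1,\dots,x_N$ be a maximal $s$-packing of $M$, so the balls $B(x_i,s/2)$ are pairwise disjoint --- whence $N \le 1/(c_o (s/2)^{d_o})$ since $\nu$ is a probability measure --- while the packing is simultaneously an $s$-cover of $M$. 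For each $i$ we have $\nu(B(x_i,s)) \ge c_o s^{d_o}$, so a lower-tail Chernoff bound shows that at least $k$ of the $\theta$'s land in $B(x_i,s)$ except with probability $\delta/N$, provided $m\, c_o s^{d_o}$ exceeds a quantity of order $k + \log(N/\delta)$; a union bound over the $N$ packing points then handles all of them at once, with total failure probability at most $\delta$. Finally, any $x \in M$ lies within $s$ of some $x_i$, so $B(x_i,s) \subseteq B(x,2s)$, and hence $d_k(x) < 2s$ for every $x \in M$.

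It remains to pick the scale. Choosing $s$ so that $c_o s^{d_o}$ is a suitable fixed multiple of $\varepsilon$ makes $N$ of order $m/k$, so $\log N$ is absorbed into the $d_o\log(8m/k)$ term and the Chernoff/union-bound requirement is met; and the side condition on $k$ and $m$ in the statement is exactly what keeps $s \le r_o$, so that~\eqref{eqn:measure-lower-bound} may legitimately be invoked at the radii used. Combining this with the two reductions, $\sup_{x \in M}|\widehat{f}(x) - f(x)| \le \lambda \max_j \diam(C_j) \le 2\lambda \sup_x d_k(x) \le 4\lambda s \le 8\lambda\, c_o^{-1/d_o}\varepsilon^{1/d_o}$, which is the claimed bound. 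I expect the real work --- and the main obstacle --- to lie in the previous paragraph: recognizing that one must \emph{not} appeal to the VC dimension of balls in $\R^d$, but instead route both the covering number of $M$ and the per-ball concentration through~\eqref{eqn:measure-lower-bound}, so that its exponent $d_o$ rather than the ambient $d$ controls the rate. Once that is in place, all that remains is calibrating the constants in the Chernoff bound so that the final constants come out as stated.
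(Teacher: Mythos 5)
Your proposal is correct and takes essentially the same route as the paper: reduce via Lemma~\ref{lemma:wta-approx-bd} to bounding $\max_j \diam(C_j)$, obtain an $O(m/k)$-size covering of $M$ from the mass lower bound \eqref{eqn:measure-lower-bound} (so that the intrinsic dimension $d_o$ rather than the ambient VC dimension of Euclidean balls controls the rate), and apply a uniform concentration bound over that covering to ensure every such ball captures at least $k$ of the $\theta_i$, which in turn gives $C_j \subset B(\theta_j, r)$. The one difference is the concentration tool: you propose a multiplicative Chernoff bound plus a union bound over the $N$ covering centers, whereas the paper routes through the relative-deviation bound of Boucheron--Bousquet--Lugosi (Lemmas~\ref{lemma:BBL-result}--\ref{lemma:finite-ball-concentration}) and AM--GM, which yields the exact constants in the theorem; your Chernoff calculation would come out with slightly larger constants on the logarithmic terms but the rate and all the key ideas are the same.
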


While the rate is essentially the same as the one given in Theorem~\ref{thm:manifold-approx}, there are two key differences. First, $k$ is allowed to be as small as desired; indeed the optimal setting of $k$ in the upper bound is $d_o/2$. Second, there is no dependence on the ambient dimension $d$, even in the constants.

\section{Discussion}

In this paper we study a representation of data inspired by the neural architecture in the sensory systems of several organisms. There are some high-level choices to be made in the representation: (1) whether the expansion map (the random linear map $\Theta$) is oblivious to the data distribution or adaptive to it in some way, and (2) the manner in which the resulting high-dimensional encoding $\Theta x$ is sparsified. We show that under all these settings, the representation has a universal approximation property. However, if the data $x$ lie near a low-dimensional manifold, then the approximation error can vary dramatically depending upon the method of sparsification. In particular, allowing each unit of the encoding to have its own threshold is preferable to a winner-take-all mechanism.

We then look at expansion maps $\Theta$ whose rows are sampled from a distribution similar to that of the data, and find that this mild data-dependence leads to significantly better approximation. 

One intriguing open problem is to look at more sophisticated forms of adaptivity, for instance based on a clustering of the data. A second open problem is to study expansion maps that are sparse in the sense that each unit $z_j$ receives input from only a constant number of coordinates of the input $x$; this is the case, for instance, in the fly's olfactory system.

Our work has connections to many well-known results in computer science and statistics. For instance, {\it compressed sensing}~\cite{D06,CRT06} recovers a sparse vector given random projections of it, while we use random projections to build a sparse representation; it would be interesting to understand the relationship between these two enterprises. {\it Random Fourier features}~\cite{RR07} are based on a circuit similar to ours, but in which the choice of random directions is informed by a target notion of similarity in the input space (the kernel function); perhaps our results can also be cast in this light. Finally, {\it hyperdimensional computing}~\cite{K09b} is an alternative paradigm for computer architecture in which all inputs get mapped to high-dimensional vectors, usually via random linear maps, as a prelude to computation; our findings could serve as a basis for further theoretical developments in that field.

\section{Proofs}
\label{sec:proofs}

\subsection{Proof of Lemma~\ref{lemma:wta-approx-bd}}

Pick any $x \in \X$ and any region $C_j$ containing $x$. By the Lipschitz condition, the average value of $f$ in $C_j$ can differ from $f(x)$ by at most $\lambda \, \diam(C_j)$. Thus,
$$ \widehat{f}(x) = \frac{1}{k} \sum_{j: x \in C_j} f(C_j) = \frac{1}{k} \sum_{j: x \in C_j} (f(x) + (f(C_j) - f(x))) = f(x) \pm \frac{\lambda}{k} \sum_{j: x \in C_j} \diam(C_j) .$$

\subsection{Proof of Lemma~\ref{lemma:C-diam-bound}}

A key part of this proof involves assessing how many of the rows $\theta_1, \ldots, \theta_m$ fall within specific Euclidean balls. The following result from \cite[Lemma 16]{CD10} is helpful.
\begin{lemma}[\cite{CD10}]
There is an absolute constant $c_o > 0$ for which the following holds. Pick any $0 < \delta < 1$. Pick $\theta_1, \ldots, \theta_m$ independently at random from a distribution $\nu$ on $\R^d$. Then with probability at least $1-\delta$, any ball $B$ in $\R^d$ with
$$ \nu(B) \geq \frac{2}{m}\left( k + c_o \left( d \log m + \log \frac{1}{\delta} \right)\right)$$
contains at least $k$ of the $\theta_i$.
\label{lemma:ball-concentration}
\end{lemma}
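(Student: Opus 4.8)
The plan is to establish this as a uniform (VC-type) deviation bound over the class $\mathcal{B}$ of all Euclidean balls in $\R^d$. The key structural fact is that $\mathcal{B}$ has VC dimension at most $d+1$: membership $x \in B(c,r)$ is equivalent to $\|x\|^2 - 2 c \cdot x + (\|c\|^2 - r^2) \leq 0$, which is a halfspace condition in the lifted coordinates $(\|x\|^2, x) \in \R^{d+1}$, so $\mathcal{B}$ sits inside a class of halfspaces and inherits a polynomial shatter coefficient $\mathcal{S}_{\mathcal{B}}(n) \leq (n+1)^{d+1}$. Writing $\widehat{\nu}$ for the empirical measure of $\theta_1,\ldots,\theta_m$, the whole argument reduces to controlling $\nu(B)$ in terms of $\widehat\nu(B)$ uniformly over $B \in \mathcal{B}$.

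First I would invoke a one-sided \emph{relative} (ratio-type) deviation inequality for VC classes, in the form where the true measure appears under the square root --- see \cite{DGL96}. This gives, with probability at least $1-\delta$, simultaneously for every ball $B \in \mathcal{B}$,
$$ \nu(B) - \widehat\nu(B) \ \leq \ \epsilon \sqrt{\nu(B)}, \qquad \text{where } \ \epsilon^2 = \frac{c_1 \left( d \log m + \log(1/\delta) \right)}{m} $$
for a suitable absolute constant $c_1$, obtained by calibrating the union bound $4\, \mathcal{S}_{\mathcal{B}}(2m)\, e^{-m\epsilon^2/4} = \delta$ and using $\log \mathcal{S}_{\mathcal{B}}(2m) = O(d \log m)$. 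It is essential to use the variant with $\sqrt{\nu(B)}$, rather than $\sqrt{\widehat\nu(B)}$, on the right-hand side: that is precisely what converts ``few sample points in $B$'' into ``small true mass of $B$.''

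Next I would solve this inequality for $\nu(B)$. Viewing it as a quadratic in $u = \sqrt{\nu(B)}$, namely $u^2 - \epsilon u - \widehat\nu(B) \leq 0$, we get $u \leq \epsilon + \sqrt{\widehat\nu(B)}$, hence $\nu(B) \leq 2\epsilon^2 + 2\widehat\nu(B)$ for every $B \in \mathcal{B}$ simultaneously. Now take the contrapositive of the desired claim: if a ball $B$ contains fewer than $k$ of the $\theta_i$, then $\widehat\nu(B) < k/m$, so $\nu(B) < \frac{2k}{m} + 2\epsilon^2 = \frac{2}{m}\big(k + c_o(d\log m + \log(1/\delta))\big)$, where $c_o$ absorbs $c_1$ together with the $\log 2$ and $\log 4$ factors hidden inside $\epsilon$. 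Equivalently, on this same high-probability event, any ball with $\nu(B) \geq \frac{2}{m}(k + c_o(d\log m + \log(1/\delta)))$ must contain at least $k$ of the $\theta_i$, which is exactly the statement.

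I expect the main obstacle to be stating and applying the relative-deviation inequality in exactly the right form: the literature has several close variants differing in whether $\sqrt{\nu(B)}$ or $\sqrt{\widehat\nu(B)}$ appears and in their multiplicative constants, and one must pick the ``true-over-empirical'' version and then track constants carefully so that the final bound comes out in the clean shape $\frac{2}{m}(k + c_o(\cdots))$ with $c_o$ genuinely absolute (independent of $d$, $m$, $k$, $\delta$). The underlying symmetrization / ghost-sample argument and the union bound over the shatter coefficient are entirely standard, so the real work is the constant bookkeeping; alternatively, one can simply cite \cite[Lemma 16]{CD10} verbatim, as is done here.
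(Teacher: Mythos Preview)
The paper does not actually prove this lemma; it is imported verbatim from \cite[Lemma 16]{CD10} and used as a black box in the proof of Lemma~\ref{lemma:C-diam-bound}. Your sketch is correct and is, in outline, the standard argument one would expect to find in \cite{CD10}: bound the VC dimension of Euclidean balls by $d+1$ via the lifting $x \mapsto (\|x\|^2, x)$, apply a one-sided relative deviation inequality with $\sqrt{\nu(B)}$ on the right, solve the resulting quadratic, and take the contrapositive. You have also correctly flagged the only delicate point, namely selecting the variant of the relative deviation bound with the true (not empirical) measure under the square root and then absorbing constants into $c_o$. Since the paper offers nothing to compare against beyond the citation, there is no methodological divergence to discuss; your closing remark that one can ``simply cite \cite[Lemma 16]{CD10} verbatim'' is exactly what the paper does.
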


Recall that we take $\nu$ to be a distribution over unit vectors, and define the quantity $\nu_o$ as in (\ref{eq:nu-min}). Lemma~\ref{lemma:C-diam-bound} is then a direct consequence of the following.

\begin{lemma}
Pick any $0 < \delta < 1$. Suppose that $r$ is chosen so that
$$ \nu_o(r) \ \geq \ \frac{2}{m} \left( k + c_o \left( d \log m + \log \frac{1}{\delta} \right)\right),$$
where $c_o$ is the constant from Lemma~\ref{lemma:ball-concentration}. Then with probability at least $1-\delta$ over the choice of $\Theta$, we have $C_j \subset B(\theta_j, r)$ for all $1 \leq j \leq m$.
\label{lemma:C-locality}
\end{lemma}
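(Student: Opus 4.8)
The plan is to reduce everything to a single high-probability event supplied by Lemma~\ref{lemma:ball-concentration}, and then argue deterministically on that event. Concretely, I would let $E$ be the event that every Euclidean ball $B \subset \R^d$ with
$\nu(B) \geq \frac{2}{m}(k + c_o(d\log m + \log(1/\delta)))$ contains at least $k$ of the rows $\theta_1,\dots,\theta_m$. Lemma~\ref{lemma:ball-concentration} gives $\pr(E) \geq 1-\delta$. The crucial feature of $E$ is that it is a statement about \emph{all} balls at once (a uniform, VC-type guarantee), so there is no need for a further union bound over the index $j$ or over the input $x$.

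Next, working on $E$, I would establish the contrapositive of the claimed containment: for every $x \in \X$ and every $j$, if $\|x - \theta_j\| \geq r$ then $x \notin C_j$. Recall the observation already made in the text: since all $\theta_i$ are unit vectors, $\|x - \theta_i\| \leq \|x - \theta_j\|$ is equivalent to $\theta_i \cdot x \geq \theta_j \cdot x$, so $z_j$ fires on $x$ exactly when $\theta_j$ is among the $k$ rows closest to $x$ in Euclidean distance. Now assume $\|x - \theta_j\| \geq r$. By the definition of $\nu_o$ in (\ref{eq:nu-min}) and the hypothesis on $r$, the open ball $B(x,r)$ has $\nu(B(x,r)) \geq \nu_o(r) \geq \frac{2}{m}(k + c_o(d\log m + \log(1/\delta)))$, so on $E$ it contains at least $k$ of the rows. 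Since $\|x - \theta_j\| \geq r$ we have $\theta_j \notin B(x,r)$, so these are at least $k$ rows $\theta_i$, all distinct from $\theta_j$, with $\|x - \theta_i\| < r \leq \|x - \theta_j\|$, hence with $\theta_i \cdot x > \theta_j \cdot x$. Therefore at least $k$ coordinates of $y = \Theta x$ strictly exceed $y_j$, so $y_j$ is not among the $k$ largest entries and $z_j = 0$; that is, $x \notin C_j$. Taking the contrapositive yields $C_j \subset B(\theta_j, r)$ for all $j$, which holds on $E$, i.e., with probability at least $1-\delta$.

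I do not anticipate a genuine obstacle: the substantive work is carried by the uniform ball-counting bound of \cite{CD10}, and the remainder is the elementary dictionary between ``top-$k$ dot products with unit vectors'' and ``$k$ nearest $\theta$'s,'' together with the monotonicity of $r \mapsto \nu(B(x,r))$ encoded in $\nu_o$. The one point I would state with care is that the strict chain $\|x-\theta_i\| < r \leq \|x-\theta_j\|$ excludes $\theta_j$ from the top $k$ regardless of how ties among the remaining rows are broken, which is why ``breaking ties arbitrarily'' causes no trouble. (Lemma~\ref{lemma:C-diam-bound} then follows by choosing $r$ to be the least value meeting the hypothesis, namely $r = \nu_o^{-1}\!\big(\frac{2k}{m} + \frac{2c_o(d\log m + \log 1/\delta)}{m}\big)$, and noting $\diam(C_j) \leq \diam(B(\theta_j,r)) = 2r$.)
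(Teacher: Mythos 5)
Your proof is correct and follows essentially the same route as the paper's: apply the uniform ball-counting guarantee of Lemma~\ref{lemma:ball-concentration} once, use $\nu_o(r)$ to argue every ball $B(x,r)$ with $x\in\X$ meets the mass threshold and hence contains at least $k$ rows, and then translate ``$k$ rows strictly closer to $x$ than $\theta_j$'' into ``$z_j$ does not fire.'' The only difference is that you spell out the contrapositive and the tie-breaking issue explicitly, whereas the paper states the same chain of implications more tersely.
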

\begin{proof}
Any ball $B$ of radius $r$ centered in $\X$ has $\nu(B) \geq \nu_o(r)$. Given the lower bound on $\nu_o(r)$, we can apply Lemma~\ref{lemma:ball-concentration} to conclude that with probability at least $1-\delta$ every such ball contains at least $k$ of the $\theta_j$'s. Thus for any $x \in \X$, its $k$ nearest $\theta_j$'s lie within radius $r$; or conversely, any $z_j$ is only activated for points $x$ within distance $r$ of $\theta_j$.
\end{proof}

\subsection{Proof of Theorem~\ref{thm:general-rate}}

Since $\X \subset S^{d-1}$, any set $B(x,r)$ with $x \in \X$ is a spherical cap, so that $\nu(B(x,r))$ scales as $r^{d-1}$. Thus, by Lemma~\ref{lemma:C-diam-bound}, we have
$$ \max_j \diam(C_j) \leq c_d \left( \frac{2k}{m} + \frac{2c_o (d \log m + \log 1/\delta)}{m} \right)^{1/(d-1)}$$
for some constant $c_d$ that depends on $d$. We then invoke Lemma~\ref{lemma:wta-approx-bd}. 

\subsection{Proof of Theorem~\ref{thm:lower-bound}}

Define $f: \X_1 \rightarrow \R$ to be a triangular function: for $0 < \theta \leq 2\pi$,
$$ 
f(\cos \theta, \sin \theta, 0, 0, \ldots, 0) 
\ = \ 
\left\{
\begin{array}{ll}
2\lambda \theta/\pi & \mbox{if $\theta \leq \pi$} \\
2\lambda (2 \pi - \theta)/\pi & \mbox{if $\theta > \pi$}
\end{array}
\right.
$$ 
This function is $\lambda$-Lipschitz.

Let $\theta_1, \ldots, \theta_m$ be chosen independently from the uniform distribution $\nu$ on $S^{d-1}$. Since $k=1$, the cell $C_j$ consists of points $x \in \X_1$ that are closer to $\theta_j$ than to any other $\theta_i$. This is a convex set, and hence every $C_j$ is either empty or consists of an arc of $\X_1$. We'll show that almost all of them are empty, and hence at least one arc has significant length, implying the bound.

By Lemma~\ref{lemma:C-locality}, there is a constant $c_1 > 0$ such that with probability $> 3/4$, any ball $B$ of probability mass $\nu(B) > (c_1 d \log m)/m$ will contain at least one $\theta_j$. Call this good event $E_1$. 

For $x \in \X_1$, the ball of radius $r$ around $x$ can be seen (Lemma~\ref{lemma:spherical-caps}(a)) to have probability mass at least
$$ \frac{1}{3 \sqrt{d}} \cdot r^{d-1} \cdot \left( 1 - r^2/4 \right)^{(d-1)/2} .$$
Choose $r_o$ so that 
$$ r_o^{d-1} \left(1 - r_o^2/4 \right)^{(d-1)/2} = \frac{3c_1 d^{3/2} \log m}{m}.$$
Under $E_1$, we then have that $B(x,r_o)$ contains at least one $\theta_j$ for every $x \in \X_1$. In particular, this means that any $\theta_j$ at distance $> r_o$ from $\X_1$ has an empty cell $C_j$ and is thus unused.

Denote the region of possibly-useful $\theta$'s by 
$$U = \{\theta \in S^{d-1}: \mbox{$\theta$ is within distance $r_o$ of $\X_1$}\} .$$ 
A simple calculation (Lemma~\ref{lemma:spherical-caps}(b)) shows that
$$
\nu(U) 
\ = \ 
\frac{1}{2} r_o^{d-2} \left(1 - r_o^2/4 \right)^{(d-2)/2} 
\ = \ 
\frac{1}{2} \left(\frac{3c_1 d^{3/2} \log m}{m} \right)^{(d-2)/(d-1)} .
$$
The expected number of $\theta_j$ that fall in $U$ is $m \nu(U)$. With probability at least $3/4$, the actual number of such $\theta_j$ is at most $4 m \nu(U)$; call this event $E_2$.

Thus with probability $\geq 1/2$, both $E_1$ and $E_2$ occur. In this case, at most $4m \nu(U) \leq 6 c_1 m^{1/(d-1)} d^{3/2} \log m$ of the $\theta_j$ have non-empty cells $C_j$. Given that the circumference of $\X_1$ is $2 \pi$, this means that at least one of the cells $C_j$ contains an arc of $\X_1$ of length $\Omega(1/(m^{1/(d-1)} \log m))$, ignoring constants in $d$. Every point in this arc will receive the same prediction under $\widehat{f}$, namely $w_j$. No matter how this is chosen, at some point on the arc the error will be at least $\lambda/2$ times the arc length.

\begin{lemma}
Let $\nu$ denote the uniform distribution on $S^{d-1}$, for $d > 3$.
\begin{enumerate}
\item[(a)] For any $x \in S^{d-1}$ and any $0 < r < 1$, we have 
$$\nu(B(x,r)) \geq \frac{1}{3\sqrt{d}} \cdot r^{d-1} \cdot (1 - r^2/4)^{(d-1)/2}.$$
\item[(b)] Let $\X_1$ denote the circle in (\ref{eq:circle}). For any $0 < r < 1$, the region of $S^{d-1}$ at distance $\leq r$ from $\X_1$ has probability mass 
$$\frac{1}{2} \cdot r^{d-2} \cdot (1-r^2/4)^{(d-2)/2}.$$
\end{enumerate}
\label{lemma:spherical-caps}
\end{lemma}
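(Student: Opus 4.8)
The plan is to reduce each part to a one-dimensional integral by choosing coordinates on $S^{d-1}$ adapted to the geometry at hand, and then to estimate that integral.

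For part~(a), I would slice the sphere by the ``height'' $t=\langle x,p\rangle\in[-1,1]$ of a point $p$ along the direction $x$. Writing $p=tx+\sqrt{1-t^2}\,w$ with $w$ a unit vector orthogonal to $x$, a Jacobian computation shows that the law of $t$ under $\nu$ has density proportional to $(1-t^2)^{(d-3)/2}$. Since $\|p-x\|^2=2-2t$, the ball $B(x,r)$ is exactly the slice $t\in(1-r^2/2,\,1]$, so
\[
\nu(B(x,r))=\frac{\int_{1-r^2/2}^{1}(1-t^2)^{(d-3)/2}\,dt}{\int_{-1}^{1}(1-t^2)^{(d-3)/2}\,dt}.
\]
For the numerator I substitute $t=1-s$ and use $1-t^2=s(2-s)\ge 2s(1-r^2/4)$ on $s\in[0,r^2/2]$, which after integrating $s^{(d-3)/2}$ gives a lower bound of $\frac{1}{d-1}\,r^{d-1}(1-r^2/4)^{(d-3)/2}\ge\frac{1}{d-1}\,r^{d-1}(1-r^2/4)^{(d-1)/2}$. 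The denominator is the Beta integral $\sqrt{\pi}\,\Gamma(\tfrac{d-1}{2})/\Gamma(\tfrac d2)$; using a standard bound on this ratio of Gamma functions (e.g.\ Gautschi's inequality) I would bound it above by $\sqrt{2\pi(d+1)}/(d-1)$. Dividing, $\nu(B(x,r))\ge r^{d-1}(1-r^2/4)^{(d-1)/2}/\sqrt{2\pi(d+1)}$, and a one-line check gives $\sqrt{2\pi(d+1)}\le 3\sqrt d$ for every $d\ge 3$, which yields the claim.

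For part~(b), I would instead use coordinates adapted to the splitting $\R^d=\R^2\times\R^{d-2}$ in which the circle $\X_1$ lives: write $p=(\cos\phi\cdot u,\ \sin\phi\cdot v)$ with $u\in S^1$, $v\in S^{d-3}$, and $\phi\in[0,\pi/2]$, where $\cos\phi=\|(p_1,p_2)\|$. A short Jacobian computation shows the marginal density of $\phi$ is proportional to $\cos\phi\,\sin^{d-3}\phi$ on $[0,\pi/2]$. Given $p$ with $\|(p_1,p_2)\|=\cos\phi>0$, the nearest point of $\X_1$ is $(p_1,p_2)/\cos\phi$, at squared Euclidean distance $2(1-\cos\phi)$; hence $p$ lies within distance $r$ of $\X_1$ iff $\cos\phi\ge 1-r^2/2$, i.e.\ $\phi\le\phi_0$ where $\cos\phi_0=1-r^2/2$. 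Therefore the mass in question equals
\[
\frac{\int_0^{\phi_0}\cos\phi\,\sin^{d-3}\phi\,d\phi}{\int_0^{\pi/2}\cos\phi\,\sin^{d-3}\phi\,d\phi}=\sin^{d-2}\phi_0=\big(1-(1-r^2/2)^2\big)^{(d-2)/2}=r^{d-2}(1-r^2/4)^{(d-2)/2},
\]
which in particular is at least the claimed $\tfrac12\,r^{d-2}(1-r^2/4)^{(d-2)/2}$.

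The only place where care is needed is the dimension-dependent constant in part~(a): the convenient Gaussian bound $(1-t^2)^{(d-3)/2}\le e^{-(d-3)t^2/2}$ on the denominator is too lossy to produce the factor $\tfrac{1}{3\sqrt d}$ at the smallest admissible dimensions (around $d=4$), so one must estimate the Beta integral via a sharper Gamma-ratio inequality. Part~(b) is an exact computation once the coordinates adapted to the $\R^2\times\R^{d-2}$ splitting are set up, and requires no estimation at all.
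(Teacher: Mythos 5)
Your proof is correct, and it takes a genuinely different route from the paper's. The paper generates a uniform point on $S^{d-1}$ by normalizing a standard Gaussian vector, notes that $\theta_1^2 \sim \mbox{Beta}(1/2,(d-1)/2)$ and $\theta_1^2+\theta_2^2 \sim \mbox{Beta}(1,(d-2)/2)$, and then invokes the helper Lemma~\ref{lemma:beta-bounds} on Beta tail probabilities; the inequality $\Gamma(x-1/2)/\Gamma(x) \geq 1/\sqrt{x}$ produces the factor $1/(3\sqrt d)$. You instead integrate the spherical surface measure directly in adapted coordinates -- the height $t=\langle x,p\rangle$ for part~(a), and the angle $\phi$ of the $\R^2\times\R^{d-2}$ splitting for part~(b) -- and your pointwise bound $1-t^2 \geq 2s(1-r^2/4)$ on the integration range plays the role of the paper's Beta lower bound. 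The two derivations are close cousins (both are essentially a Beta integral) but are organized differently; yours is more self-contained while the paper's packages the computation into a reusable Beta-tail lemma. Both routes must, in part~(a), control the same Gamma ratio, and yours handles it via Gautschi's inequality rather than the paper's lower bound.

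One point is worth flagging explicitly rather than hedging around: your exact computation in part~(b) gives $r^{d-2}(1-r^2/4)^{(d-2)/2}$, which is twice the value stated in the lemma. This is not a slack estimate on your side -- it is an exact equality -- and it exposes a small slip in the paper's proof. The paper writes $\nu(U) = \tfrac12\,\pr\bigl(\theta_1^2+\theta_2^2 \geq (1-r^2/2)^2\bigr)$, but the factor $\tfrac12$ is spurious here: in part~(a) it comes from the symmetry $\pr(\theta_1\geq a) = \tfrac12\pr(\theta_1^2\geq a^2)$ for $a>0$, whereas $\sqrt{\theta_1^2+\theta_2^2}$ is already nonnegative, so $\pr\bigl(\sqrt{\theta_1^2+\theta_2^2}\geq 1-r^2/2\bigr) = \pr\bigl(\theta_1^2+\theta_2^2\geq(1-r^2/2)^2\bigr)$ with no halving. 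So the correct mass is $r^{d-2}(1-r^2/4)^{(d-2)/2}$, as you computed; this does not affect Theorem~\ref{thm:lower-bound}, where only the order of magnitude of $\nu(U)$ is used and constants are absorbed into $c_d'$, but the lemma statement should not carry the $\tfrac12$, and the phrasing ``at least the claimed'' in your write-up understates what you actually proved.
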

\begin{proof}
We can generate a random sample $\theta = (\theta_1, \ldots, \theta_d) \sim \nu$ by first drawing $Y_1, \ldots, Y_d$ independently from a standard normal distribution, and then taking
$$ \theta_i = \frac{Y_i}{(Y_1^2 + \cdots + Y_d^2)^{1/2}} .$$
This works because the distribution of $Y = (Y_1, \ldots, Y_d)$ is spherically symmetric. Now, for any $k$, the sum $Y_1^2 + \cdots + Y_k^2$ has a chi-squared distribution with $k$ degrees of freedom, denoted $\chi^2(k)$. It is well-known that if $A \sim \chi^2(k)$ and $B \sim \chi^2(\ell)$ are independent, then $A/(A+B)$ has a $\mbox{Beta}(k/2, \ell/2)$ distribution. Thus $\theta_1^2 = Y_1^2/(Y_1^2 + \cdots + Y_d^2)$ follows a $\mbox{Beta}(1/2, (d-1)/2)$ distribution and $\theta_1^2 + \theta_2^2 = (Y_1^2 + Y_2^2)/(Y_1^2 + \cdots + Y_d^2)$ follows a $\mbox{Beta}(1,(d-2)/2)$ distribution. 

For part (a), we can assume without loss of generality that $x = e_1$. Then 
$$ \theta \in B(x,r) 
\ \Leftrightarrow \  
\|\theta - e_1 \|^2 \leq r^2 
\ \Leftrightarrow \ 
\theta_1 \geq 1 - r^2/2 ,
$$
whereupon, using Lemma~\ref{lemma:beta-bounds} with $\epsilon = 1 - (1-r^2/2)^2 = r^2(1-r^2/4)$,
\begin{align*}
\nu(B(x,r))
&=
\frac{1}{2} \cdot \pr \left(\theta_1^2 \geq (1 - (r^2/2))^2 \right)
\ = \ 
\frac{1}{2} \cdot \pr(\theta_1^2 \geq 1- \epsilon) 
\\
&\geq  
\frac{1}{2} \cdot \frac{1}{B(1/2, (d-1)/2)} \cdot \frac{\epsilon^{(d-1)/2}}{(d-1)/2} \\
&=
\frac{1}{2} \cdot \frac{\Gamma(d/2)}{\Gamma(1/2) \Gamma((d-1)/2) (d-1)/2} \cdot r^{d-1} \cdot (1-r^2/4)^{(d-1)/2} \\
&=
\frac{1}{2} \cdot \frac{\Gamma(d/2)}{\sqrt{\pi}\Gamma((d+1)/2)} \cdot r^{d-1} \cdot (1-r^2/4)^{(d-1)/2} \\
&\geq \frac{1}{2} \cdot \frac{1}{\sqrt{\pi (d+1)/2}} \cdot r^{d-1} \cdot (1-r^2/4)^{(d-1)/2}
\ \geq \ 
\frac{1}{3 \sqrt{d}} \cdot r^{d-1}\cdot (1-r^2/4)^{(d-1)/2}
\end{align*}
where we have used $\Gamma(x+1) = x \Gamma(x)$ and $\Gamma(x-0.5)/\Gamma(x) \geq 1/\sqrt{x}$.

Now we move to part (b). For any $\theta \in S^{d-1}$, the nearest point in $\X_1$ is $(\theta_1, \theta_2, 0, \ldots, 0)/\sqrt{\theta_1^2 + \theta_2^2}$. Let $U \subseteq S^{d-1}$ denote the region of the unit sphere that is within distance $r$ of $\X_1$. Then
$$
\theta \in U 
\ \Leftrightarrow \ 
\left\| \theta - \frac{(\theta_1, \theta_2, 0, \ldots, 0)}{\sqrt{\theta_1^2 + \theta_2^2}} \right\|^2 \leq r^2 
\ \Leftrightarrow \ 
\sqrt{\theta_1^2 + \theta_2^2} \geq 1 - r^2/2 .
$$
Recalling that $\theta_1^2 + \theta_2^2$ has a $\mbox{Beta}(1,(d-2)/2)$ distribution, and using the same setting of $\epsilon$ as before, we apply Lemma~\ref{lemma:beta-bounds} to get that
\begin{align*}
\nu(U) 
&= 
\frac{1}{2} \cdot \pr(\theta_1^2 + \theta_2^2 \geq (1 - r^2/2)^2) 
\ = \ 
\frac{1}{2} \cdot \pr(\theta_1^2 + \theta_2^2 \geq 1 - \epsilon) \\
&= \frac{1}{2} \cdot \epsilon^{(d-2)/2} 
\ = \ 
\frac{1}{2} \cdot r^{d-2} \cdot (1- r^2/4)^{(d-2)/2} .
\end{align*}
\end{proof}

\begin{lemma}
Suppose $Z$ has a $\mbox{Beta}(\alpha, \beta)$ distribution with $\alpha \leq 1$ and $\beta \geq 1$. For any $0 < \epsilon < 1$,
$$ \frac{1}{B(\alpha, \beta)} \cdot \frac{\epsilon^\beta}{\beta} 
\ \leq \ 
\pr(Z \geq 1 - \epsilon) 
\ \leq \ 
\frac{1}{B(\alpha, \beta)} \cdot \frac{\epsilon^\beta}{\beta} \cdot (1-\epsilon)^{\alpha-1} 
.$$
In particular, for $\alpha = 1$ we get $\pr(Z \geq 1 - \epsilon) = \epsilon^\beta$.
\label{lemma:beta-bounds}
\end{lemma}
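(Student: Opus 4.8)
The plan is to write the tail probability as an incomplete Beta integral and then bound the ``wrong'' factor $(1-z)^{\beta-1}$ after a change of variables, exploiting the sign of the exponent $\alpha-1$. Recall that the $\mbox{Beta}(\alpha,\beta)$ density is $z^{\alpha-1}(1-z)^{\beta-1}/B(\alpha,\beta)$ on $(0,1)$, so
$$ \pr(Z \geq 1-\epsilon) = \frac{1}{B(\alpha,\beta)} \int_{1-\epsilon}^1 z^{\alpha-1}(1-z)^{\beta-1}\, dz .$$
First I would substitute $u = 1-z$, which flips the roles of the two factors and turns the integral into $\int_0^\epsilon u^{\beta-1}(1-u)^{\alpha-1}\, du$; the tail probability is thus $\frac{1}{B(\alpha,\beta)} \int_0^\epsilon u^{\beta-1}(1-u)^{\alpha-1}\, du$.

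Next I would pull out the factor $(1-u)^{\alpha-1}$ and bound it uniformly over the range $u \in (0,\epsilon)$. Since $\alpha \leq 1$, the exponent $\alpha - 1$ is nonpositive, and $1-u \in (0,1)$, so $(1-u)^{\alpha-1} \geq 1$; this gives the lower bound
$$\int_0^\epsilon u^{\beta-1}(1-u)^{\alpha-1}\, du \ \geq \ \int_0^\epsilon u^{\beta-1}\, du \ = \ \frac{\epsilon^\beta}{\beta}.$$
For the upper bound, note that because $\alpha-1 \leq 0$ the map $u \mapsto (1-u)^{\alpha-1}$ is nondecreasing on $(0,\epsilon)$, so it is at most $(1-\epsilon)^{\alpha-1}$ there; pulling this constant out and integrating $u^{\beta-1}$ again yields $(1-\epsilon)^{\alpha-1}\,\epsilon^\beta/\beta$. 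Dividing both bounds by $B(\alpha,\beta)$ gives exactly the claimed two-sided estimate.

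Finally, for the special case $\alpha = 1$ I would simply observe that $B(1,\beta) = \Gamma(1)\Gamma(\beta)/\Gamma(\beta+1) = 1/\beta$ and that $(1-\epsilon)^{\alpha-1} = (1-\epsilon)^0 = 1$, so the lower and upper bounds coincide and both equal $\epsilon^\beta$; equivalently one can integrate $u^{\beta-1}$ directly since the second factor disappears. Honestly there is no real obstacle here: the only point that needs care is tracking the sign of $\alpha-1$ so that the monotonicity/bounding direction is right (this is where the hypothesis $\alpha \leq 1$ is used), and the hypothesis $\beta \geq 1$ is not essential for the inequalities themselves, only ensuring we are in the regime of interest. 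The whole argument is a few lines once the substitution $u = 1-z$ is made.
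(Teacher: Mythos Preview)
Your proposal is correct and essentially identical to the paper's proof: the paper bounds the factor $z^{\alpha-1}$ on $[1-\epsilon,1]$ by $1$ below and by $(1-\epsilon)^{\alpha-1}$ above (using $\alpha\le 1$) and integrates $(1-z)^{\beta-1}$ exactly, which is the same computation you do after the substitution $u=1-z$. Your remark that $\beta\ge 1$ is not actually needed for the inequalities is also accurate.
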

\begin{proof}
Recall that $Z$ has density
$$ p(z) = \frac{1}{B(\alpha, \beta)} z^{\alpha-1}(1-z)^{\beta-1}, 
\mbox{\ \ \ where\ \ \ } B(\alpha, \beta) = \frac{\Gamma(\alpha) \Gamma(\beta)}{\Gamma(\alpha + \beta)}.$$
Thus
$$ \pr(Z \geq 1-\epsilon) 
\ = \ 
\frac{1}{B(\alpha, \beta)} \int_{1-\epsilon}^1 z^{\alpha-1} (1-z)^{\beta-1} dz 
\ \geq \ 
\frac{1}{B(\alpha, \beta)} \int_{1-\epsilon}^1 (1-z)^{\beta-1} dz 
\ = \ 
\frac{1}{B(\alpha, \beta)} \frac{\epsilon^\beta}{\beta}.$$
Likewise,
$$
\pr(Z \geq 1-\epsilon) 
\ \leq \ 
\frac{1}{B(\alpha, \beta)} (1-\epsilon)^{\alpha-1} \int_{1-\epsilon}^1 (1-z)^{\beta-1} dz 
\ = \ 
\frac{1}{B(\alpha, \beta)} \frac{\epsilon^\beta}{\beta} (1-\epsilon)^{\alpha-1}.
$$
When $\alpha = 1$, these bounds coincide and $B(\alpha,\beta) = \Gamma(\beta)/\Gamma(\beta + 1) = 1/\beta$.
\end{proof}

\subsection{Proof of Lemma~\ref{lemma:good-theta}}

Pick any good $\theta$, and let $x = \pi_M(\theta)$ denote its projection on $M$.

Recall that $\X$ consists of unit vectors. The points that lie in $C(\theta)$ are the $k/m$ fraction of $x$'s (under distribution $\mu$) that have the highest dot product with $\theta$, or equivalently, the $k/m$ fraction of $x$'s that are closest to $\theta$. Thus $C(\theta)$ is a set of the form $B(\theta, r')$, where radius $r'$ is chosen so that $\mu(B(\theta, r')) = k/m$. However, it is not necessarily of the form $B(x, r'')$, and this causes some complications. In particular, we need to address two questions: (i) if a point $x' \in M$ lies within distance $r < \rho$ of $x$, how far can be possibly be from $\theta$, and conversely, (ii) if $x' \in M$ lies within distance $r' < \rho$ of $\theta$, how far can it possibly be from $x$?

The condition on $M$'s reach plays a key role here. Consider the normal vector $\theta - x$ that connects $x$ to $\theta$. Let $\Delta = \|\theta - x\|$ denote its length, which is $< \rho/2$ by the goodness condition, and let unit vector $u = (\theta - x)/\|\theta - x\|$ be the corresponding direction. Consider balls of radius $\rho$ centered at $x + \rho u$ and $x - \rho u$, and touching at $x$ (Figure~\ref{fig:two-spheres}, left). By the definition of reach, $M$ does not touch the interiors of these two balls.

\begin{figure}
\begin{center}
\raisebox{-.25in}{\includegraphics[width=1.25in]{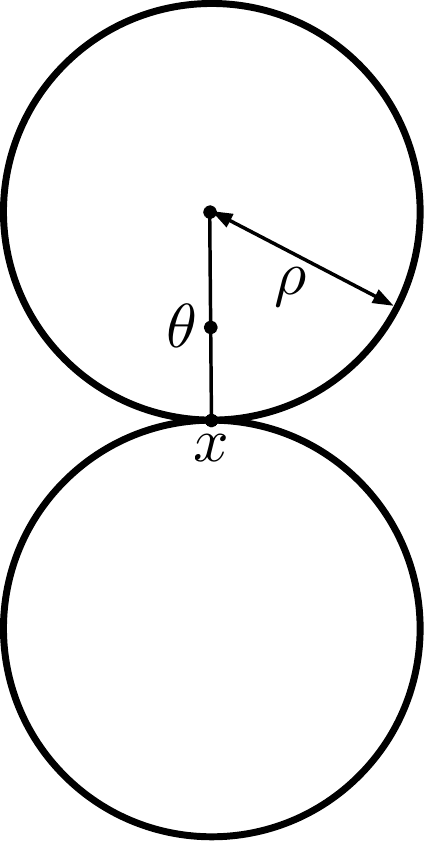}}
\hskip.75in
\raisebox{-.25in}{\includegraphics[width=1.25in]{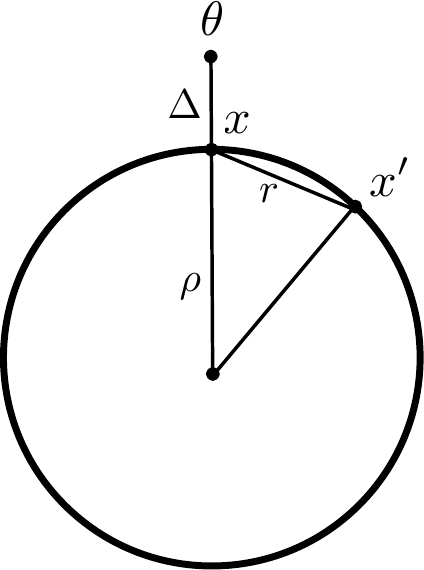}}
\hskip.5in
\raisebox{.8in}{\includegraphics[width=1.25in]{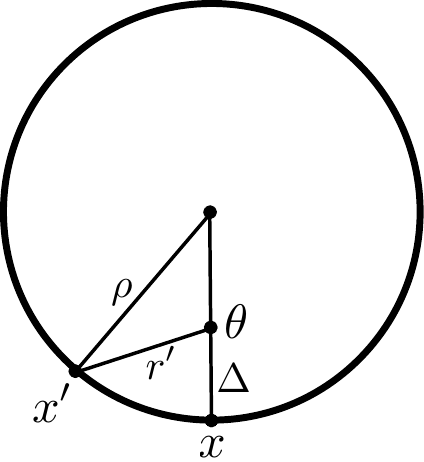}}
\end{center}
\caption{Left: Manifold $M$ does not touch the interior of these two balls of radius $\rho$. Middle and right: The two situations studied in the proof.}
\label{fig:two-spheres}
\end{figure}

Let's start with question (i). For any point $x' \in M$ at distance $r < \rho$ from $x$, consider the plane defined by $x$, $x'$, and $\theta$. The middle panel of Figure~\ref{fig:two-spheres} shows the furthest $x'$ could conceivably be from $\theta$. Applying the law of cosines twice, we have
\begin{align*}
r^2 &= 2\rho^2 - 2 \rho^2 \cos \phi \\
\|\theta - x'\|^2 &= (\rho + \Delta)^2 + \rho^2 - 2 (\rho + \Delta) \rho \cos \phi
\end{align*}
where $\phi$ is the angle subtended at the center of the circle by $x$ and $x'$. These imply
$\|\theta - x'\|^2 = \Delta^2 + r^2 (\rho + \Delta)/\rho$. Thus 
\begin{equation}
B_M(x,r) \ \subset \ B \left(\theta, \sqrt{\Delta^2 +  \frac{\rho + \Delta}{\rho} r^2} \right).
\label{eq:x-to-theta}
\end{equation}

Moving to (ii), pick any point $x' \in M$ at distance $r' < \rho$ from $\theta$; clearly, $r' \geq \Delta$. The right panel of Figure~\ref{fig:two-spheres} (depicting, once again, the plane defined by $x$, $x'$, and $\theta$) shows the furthest $x'$ could lie from $x$. Again using the law of cosines, and letting $\phi$ be the angle subtended at the center of the circle,
\begin{align*}
(r')^2 &= \rho^2 + (\rho - \Delta)^2 - 2 \rho(\rho - \Delta) \cos \phi \\
\|x-x'\|^2 &= 2 \rho^2 - 2 \rho^2 \cos \phi
\end{align*}
whereupon
$\|x - x'\|^2 = ((r')^2 - \Delta^2) \rho/(\rho-\Delta)$
and thus 
\begin{equation}
B_M(\theta, r') \subset B\left(x, \sqrt{\frac{\rho}{\rho-\Delta}((r')^2 - \Delta^2)}\right).
\label{eq:theta-to-x}
\end{equation}

To get the left-hand containment in the lemma statement, pick $r = \sqrt{(\rho-\Delta)/(\rho+\Delta)} (k/(c_2m))^{1/d_o}$ and use (\ref{eq:x-to-theta}) and (\ref{eq:theta-to-x}) to observe that for
\begin{align*}
r' &= \sqrt{\Delta^2 + \frac{\rho + \Delta}{\rho} r^2} \\
r'' &= \sqrt{\frac{\rho}{\rho-\Delta} ((r')^2 - \Delta^2)} = r \sqrt{\frac{\rho+\Delta}{\rho-\Delta}} = \left( \frac{k}{c_2 m} \right)^{1/d_o},
\end{align*}
we have $B_M(x,r)  \subset B_M(\theta, r') \subset B_M(x, r'')$. But by (\ref{eq:almost-uniform-mu}), $\mu(B(x, r'')) < k/m$. Thus $C(\theta) \supset B_M(\theta, r') \supset B_M(x,r)$.

The right-hand containment of the lemma proceeds similarly, by first observing that by (\ref{eq:almost-uniform-mu}), we have $\mu(B(x,r)) \geq k/m$ for $r = (k/c_1m)^{1/d_o}$. Taking
\begin{align*}
r' &= \sqrt{\Delta^2 + \frac{\rho + \Delta}{\rho} r^2} \\
r'' &= \sqrt{\frac{\rho}{\rho-\Delta} ((r')^2 - \Delta^2)} = \sqrt{\frac{\rho + \Delta}{\rho-\Delta}} \left( \frac{k}{c_1 m} \right)^{1/d_o},
\end{align*}
we have $B_M(x,r) \subset B_M(\theta, r') \subset B_M(x, r'')$. Thus $C(\theta) \subset B_M(\theta, r')$, and this is in turn contained in $B_M(x,r'')$.

\subsection{Proof of Lemma~\ref{lemma:good-theta-cover}}

Recall that $C(\theta)$ is the set of inputs $x$ that lie within the response region of $\theta$. We now need a dual notion, that of the $\theta$'s whose projection onto the manifold lies close to a specified $x$. For any $x \in M$ and $r > 0$, let
$$ A(x,r) = \{\theta \in \Gamma_{\rho/2}: \| \pi_M(\theta) - x \| < r \} .$$
Lemma~\ref{lemma:good-theta} tells us that if $m$ is large enough that $(k/c_1m)^{1/d_o} \leq \min(\rho, r_o)$, then for
$$r_1 = \frac{1}{2} \left( \frac{k}{c_2m} \right)^{1/d_o}$$
we have the implication $\theta \in A(x,r_1) \implies x \in C(\theta)$.

We first note that the regions $A(x,r)$ have non-negligible probability mass under $\nu$.
\begin{lemma}
Suppose $\nu$ is the multivariate Gaussian $N(0, \sigma^2 I_d)$. There is a constant $c_d$, that depends on the dimension $d$, such that for any $x \in M$ and $0 < r < r_o$, we have $\nu(A(x,r)) \geq c_d r^{d_o}$.
\label{lemma:volumes-of-A-cells}
\end{lemma}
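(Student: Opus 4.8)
The plan is to pass from the Gaussian measure of $A(x,r)$ to its ordinary Lebesgue volume, and then to estimate that volume by recognizing $A(x,r)$ as a partial tube of radius $\rho/2$ built over the manifold patch $B_M(x,r)$. For the first step, note that since $M \subset S^{d-1}$, every point of $\Gamma_{\rho/2}$ lies within distance $\rho/2$ of a unit vector and hence inside the Euclidean ball $B(0,1+\rho/2)$; on this ball the $N(0,\sigma^2 I_d)$ density is bounded below by $p_0 := (2\pi\sigma^2)^{-d/2}\exp(-(1+\rho/2)^2/(2\sigma^2)) > 0$. Since $A(x,r) \subset \Gamma_{\rho/2}$, this gives $\nu(A(x,r)) \geq p_0 \cdot \vol(A(x,r))$, and it remains to lower bound the $d$-dimensional volume of $A(x,r)$.

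For the volume estimate, I would first identify the geometry of $A(x,r)$. By definition of $\Gamma_{\rho/2}$, any $\theta \in \Gamma_{\rho/2}$ is uniquely $\theta = p + v$ with $p = \pi_M(\theta) \in M$, $v \in N(p)$ and $0 < \|v\| < \rho/2$, and the constraint cutting out $A(x,r)$ is precisely $\|p - x\| < r$, i.e. $p \in B_M(x,r)$. Thus $A(x,r)$ is exactly the image, under the tube map $\Phi(p,v) = p + v$, of the set $\{(p,v): p \in B_M(x,r),\ v \in N(p),\ 0 < \|v\| < \rho/2\}$. Because $M$ has reach $\rho$, the sets $\Gamma_\rho(p)$ are disjoint (as noted in the preliminaries), so $\Phi$ is injective on normal vectors of length less than $\rho$; this is what lets me apply the area formula.

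Next comes the Jacobian bound. In an orthonormal frame adapted to $T_pM$ and $N(p)$, the differential $D\Phi$ at $(p,v)$ is block lower-triangular with diagonal blocks $I_{d_o} - S_v$ in the tangent directions (here $S_v$ is the shape operator of $M$ at $p$ in the normal direction $v$) and $I_{d-d_o}$ in the fiber directions, so $|\det D\Phi| = |\det(I_{d_o} - S_v)|$. A standard consequence of $\mathrm{reach}(M) \geq \rho$ is that the principal curvatures of $M$ are bounded in absolute value by $1/\rho$ (see~\cite{NSW06}); hence for $\|v\| < \rho/2$ every eigenvalue of $I_{d_o} - S_v$ lies in $[1/2, 3/2]$ and $|\det D\Phi| \geq 2^{-d_o}$. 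The area formula then yields
\[ \vol(A(x,r)) \ = \ \int_{B_M(x,r)} \int_{\{v \in N(p):\, 0 < \|v\| < \rho/2\}} |\det D\Phi| \, dv \, d\vol_M(p) \ \geq \ 2^{-d_o}\, \omega_{d-d_o}\,(\rho/2)^{d-d_o}\, \vol(B_M(x,r)), \]
where $\omega_{d-d_o}$ is the volume of the unit ball in $\R^{d-d_o}$. For $r < r_o$, assumption (\ref{eq:volume-on-manifold}) gives $\vol(B_M(x,r)) \geq c_3 r^{d_o}$, and combining this with the density bound from the first step gives $\nu(A(x,r)) \geq p_0\, 2^{-d_o}\omega_{d-d_o}(\rho/2)^{d-d_o} c_3 \cdot r^{d_o} =: c_d\, r^{d_o}$; the constant $c_d$ depends on $d$ together with the fixed quantities $\sigma, \rho, d_o, c_3$ of the setting.

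The one place that needs real care is the curvature step: the facts that $\mathrm{reach}(M) \geq \rho$ forces the second fundamental form to have norm at most $1/\rho$, and that the tube map $\Phi$ is a smooth embedding on $\Gamma_\rho$ with the stated block-triangular differential, are classical (Federer's theory of sets of positive reach; \cite{NSW06}), but they must be invoked precisely, and the frame bookkeeping in the determinant computation should be double-checked. Everything else — the Gaussian density lower bound, the area formula, and the final arithmetic — is routine.
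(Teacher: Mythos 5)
Your argument is correct and follows the same two-step outline as the paper's proof: first observe that $\Gamma_{\rho/2}$, and hence $A(x,r)$, lies inside $B(0,1+\rho/2)$, where the Gaussian density is bounded below by a positive constant $p_0$, so that $\nu(A(x,r)) \geq p_0\,\vol_d(A(x,r))$; then lower-bound the Lebesgue volume of the partial tube $A(x,r)$ in terms of $\vol_{d_o}(B_M(x,r))$ and finish via assumption~(\ref{eq:volume-on-manifold}). Where you diverge is the second step. The paper cites~\cite[Lemma 18]{MMS16} to obtain $\vol_d(A(x,r)) \geq (1/2)^{d_o}(\rho/2)^{d-d_o}\vol_{d_o}(B_M(x,r))$ as a black box, whereas you derive a bound of the same form from first principles: you write $A(x,r)$ as the image of $\{(p,v): p\in B_M(x,r),\ v\in N(p),\ 0<\|v\|<\rho/2\}$ under the normal map $\Phi(p,v)=p+v$, note that positive reach makes $\Phi$ injective on $\Gamma_\rho$ so that the area formula applies, and lower-bound the Jacobian $|\det(I_{d_o}-S_v)|$ by $2^{-d_o}$ using the classical consequence of reach $\geq\rho$ that all principal curvatures have magnitude at most $1/\rho$. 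That derivation is sound and reproduces the cited inequality (yours even retains the explicit volume of the $(d-d_o)$-dimensional normal fiber, which the paper's stated constant absorbs). What your route buys is a self-contained proof; the cost is precisely the differential-geometric bookkeeping you flag as the delicate point, which the paper sidesteps by citation. The resulting constant depends on $\sigma$, $\rho$, $d$, $d_o$, and $c_3$, which is exactly the dependence the lemma permits.
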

\begin{proof}
Recall that $A(x,r)$ consists of all $\theta \in \Gamma_{\rho/2}$ that project onto $B_M(x,r)$. By (\ref{eq:volume-on-manifold}), the $d_o$-dimensional volume of $B_M(x,r)$ is $\geq c_3 r^{d_o}$. From~\cite[Lemma 18]{MMS16}, the $d$-dimensional volume of $A(x,r)$ can be bounded below as
$$ \mbox{vol}_d(A(x,r)) \geq \left( \frac{1}{2} \right)^{d_o} \left( \frac{\rho}{2} \right)^{d-d_o} \mbox{vol}_{d_o}(B_M(x,r)) \geq c_3 \rho^{d-d_o} r^{d_o}.$$

Since $M \subset S^{d-1}$, the set $A(x,r)$ lies within $B(0, 1+\rho/2)$. The smallest value that the density $\nu$ attains in this ball is
$$ \nu_o = \frac{1}{(2\pi)^{d/2} \sigma^d} \exp \left( - \frac{(1 + \rho/2)^2}{2 \sigma^2} \right) .$$
Therefore, $\nu(A(x,r)) \geq \nu_o \mbox{vol}_d(A(x,r)) \geq c_d r^{d_o}$, for some constant $c_d$ that scales exponentially with $d$.
\end{proof}

We can now embark upon the proof of Lemma~\ref{lemma:good-theta-cover}.

By (\ref{eq:almost-uniform-mu}), $M$ has a $(r_1/2)$-cover $\widehat{M}$ of size at most $(1/c_1) (4/r_1)^{d_o} \leq (c_2/c_1) 8^{d_o} m/k$. To see this, pick points $x_1, x_2, \ldots, x_N \in M$ that are at distance $> r_1/2$ from each other. The balls $B(x_i, r_1/4)$ are disjoint and each has $\mu(B(x_i, r_1/4)) \geq c_1 (r_1/4)^{d_o}$. Since the total probability mass of these balls is at most 1, it follows that $c_1 (r_1/4)^{d_o} N \leq 1$, giving an upper bound on $N$.

Pick any $\widehat{x} \in \widehat{M}$.
\begin{align*}
\mbox{Pr}(\mbox{no $\theta_j$ lies in $A(\widehat{x}, r_1/2)$}) 
&= (1 - \nu(A(\widehat{x}, r_1/2)))^m \ \leq \ (1 - c_d (r_1/2)^{d_o})^m \\
&= \left(1 - \frac{c_dk}{4^{d_o}c_2 m}\right)^m \leq \exp \left( - \frac{c_dk}{4^{d_o} c_2} \right) .
\end{align*}
For $k$ as specified in the lemma statement, with a suitable choice of $c_d'$, this is $< \delta/|\widehat{M}|$. We now take a union bound over $\widehat{M}$, to conclude that with probability at least $1-\delta$, for every $\widehat{x} \in \widehat{M}$, there is some good $\theta_j \in A(\widehat{x}, r_1/2)$.

Now pick an arbitrary $x \in M$. There is some $\widehat{x} \in \widehat{M}$ with $\|x - \widehat{x} \| \leq r_1/2$. Moreover, $\theta_j \in A(\widehat{x}, r_1/2) \implies \theta_j \in A(x, r_1) \implies x \in C(\theta_j)$.

\subsection{Proof of Theorem~\ref{thm:manifold-distrib-approx}}

The following result can be found in~\cite{BBL05}.
\begin{lemma}[Theorem 5.1~\cite{BBL05}]
\label{lemma:BBL-result}
Let $\F$ be a class of $\{0,1\}$-valued functions over $\X$, and let $\nu$ be a probability measure over $\X$. For any $\delta \in \{0, 1\}$, if $x_1, \ldots, x_n \sim \nu$, then with probability at least $1-\delta$,
\[ (\forall f \in \F )\frac{\E[f] - \E_n[f]}{\sqrt{\E_n[f]}} \ \leq \ 2\sqrt{\frac{ \log \left(S_{2n}(\F)\right) + \log(4/\delta)}{n}} \]
where $\E_n[f] = \frac{1}{n} \sum f(x_i)$ and $S_{n}(\F)$ denotes the $n$-th shattering coefficient of $\F$.
\end{lemma}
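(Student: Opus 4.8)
The statement is a relative-deviation (``ratio-type'') uniform convergence bound for a VC-type class, and the plan is to follow the classical four-step argument of Vapnik and Chervonenkis, in the variance-sensitive sharpening that produces the $\sqrt{\E_n[f]}$ in the denominator. Write $t = 2\sqrt{(\log S_{2n}(\F) + \log(4/\delta))/n}$ for the target threshold, and take the bad event to be $\{\exists f \in \F:\ \E[f] - \E_n[f] > t\sqrt{\E_n[f]}\}$; the goal is $\pr[\text{bad}] \le \delta$. The four steps are: (i) a ratio symmetrization that replaces $\E[f]$ by the empirical average $\E_n'[f]$ on an independent ghost sample $x_1',\dots,x_n' \sim \nu$, at the cost of a factor $2$; (ii) a Rademacher (sign) symmetrization over the $n$ paired points, which leaves the law of the double sample unchanged; (iii) a union bound over the at most $S_{2n}(\F)$ distinct $\{0,1\}$-patterns that $\F$ realizes on the $2n$ sample points; and (iv) a one-sided tail bound for a single fixed function, which is where $\{0,1\}$-valuedness enters: the number of indices on which $f$ fires is proportional to its empirical mean, so a Hoeffding/Bernstein bound applied only to those indices is automatically variance-sensitive and gives a tail of order $\exp(-c\,n t^2)$.

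For step (i): if some $f \in \F$ witnesses the bad event, a one-sided Chernoff/Bernstein bound on $\E_n'[f]$ shows that with probability at least $1/2$ over the ghost sample the ghost average concentrates around $\E[f]$ on the relevant scale --- one uses that a witness $f$ necessarily has $\E[f]$ not too small compared with $\E_n[f]$, so that a deviation of the ghost average below $\E[f]$ by more than a constant multiple of $t\sqrt{\E_n[f]}$ is unlikely. Consequently, on that event $\E_n'[f] - \E_n[f]$ exceeds a constant multiple of $t\sqrt{\E_n[f]}$, and hence $\pr[\text{bad}] \le 2\,\pr[\exists f:\ \E_n'[f] - \E_n[f] > c\, t\sqrt{\E_n[f]}]$, with the probability now over the $2n$ points jointly.

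For steps (ii)--(iv): condition on the multiset of $2n$ points. Since the $x_i,x_i'$ are i.i.d., applying independent signs $\sigma_i \in \{-1,+1\}$ that swap $x_i \leftrightarrow x_i'$ leaves the joint law unchanged, so it suffices to bound the sign-probability of the event for a fixed configuration; note that the empirical mean ``on the first half'' is itself randomized by the signs, and handling this correctly is the crux. Restricted to the $2n$ points, $\F$ takes at most $S_{2n}(\F)$ values, so a union bound reduces to a single $f$. For fixed $f$ and points, the reshuffled difference of half-sums is $(1/n)\sum_i \sigma_i(f(x_i') - f(x_i))$, a sum of independent terms in $\{-1,0,1\}$ whose number of nonzero terms equals $n$ times the combined empirical mass of $f$; a Hoeffding-type bound then gives a sub-Gaussian tail at exactly the scale $t$ dictated by that mass, so with $t$ as above the per-function failure probability is at most $\delta/(4 S_{2n}(\F))$, and the union bound together with the factor $2$ from step (i) closes the argument at confidence $1 - \delta$.

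The main obstacle is the ratio-symmetrization step and the attendant constant bookkeeping: one must choose the normalization at each stage so that the \emph{empirical}-mean denominator $\sqrt{\E_n[f]}$ (rather than the true-mean $\sqrt{\E[f]}$) survives into the final bound, handle the degenerate case $\E_n[f] = 0$ (for which the bad event forces $\E[f]$ to be small with high probability, so the inequality holds trivially), and track the absolute constants to land exactly on the factor $2$ and the $\log(4/\delta)$ term rather than something slightly weaker. I would follow the standard treatments --- Vapnik, Anthony--Bartlett, and in particular the proof of Theorem~5.1 in \cite{BBL05} from which this statement is quoted verbatim --- to pin down those constants; the skeleton above (ratio symmetrization, sign symmetrization, union over dichotomies, variance-sensitive single-function tail) is the entire substance of the argument.
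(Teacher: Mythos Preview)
The paper does not prove this statement at all: it simply quotes it as Theorem~5.1 of \cite{BBL05} with the preamble ``The following result can be found in~\cite{BBL05}'' and then uses it as a black box to derive Lemma~\ref{lemma:finite-ball-concentration}. So there is no ``paper's own proof'' to compare against; your proposal goes well beyond what the paper does.

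That said, your sketch is an accurate outline of how the cited result is actually established in \cite{BBL05} (and in the Vapnik and Anthony--Bartlett treatments you mention): ratio-type symmetrization via a ghost sample, Rademacher sign symmetrization, union bound over the $S_{2n}(\F)$ dichotomies, and a variance-sensitive tail for a single function exploiting $\{0,1\}$-valuedness. You correctly flag the delicate points --- keeping the \emph{empirical} mean $\sqrt{\E_n[f]}$ in the denominator through the symmetrization, handling $\E_n[f]=0$, and the constant bookkeeping that lands on the factor $2$ and $\log(4/\delta)$. For the purposes of this paper, however, none of that is needed: a one-line citation suffices.
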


Note for any finite class of functions $\F$, we trivially have $S_{n}(\F) \leq |\F|$. Thus, the following is a corollary of Lemma~\ref{lemma:BBL-result}.

\begin{lemma}
\label{lemma:finite-ball-concentration}
Let $\B$ denote any finite collection of balls in $\R^d$, and let $\delta \in (0,1)$. If $\theta_1, \ldots, \theta_m$ are drawn i.i.d. from $\nu$, then with probability at least $1-\delta$, we have that any $B \in \B$ that satisfies 
\[ \nu(B) \ \geq \frac{k}{m} + \frac{2}{m} \sqrt{k \left( \log(|\B| ) + \log(4/\delta) \right)} \]
contains at least $k$ of the $\theta_i$.
\end{lemma}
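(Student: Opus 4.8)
The plan is to apply Lemma~\ref{lemma:BBL-result} to the finite class of ball indicators and then convert the resulting one-sided relative-deviation bound into a statement about counts. Concretely, I would first set $\F = \{\mathbf{1}_B : B \in \B\}$, a class of $\{0,1\}$-valued functions on $\R^d$ with $|\F| \le |\B|$, and instantiate Lemma~\ref{lemma:BBL-result} with the i.i.d.\ sample $\theta_1, \dots, \theta_m \sim \nu$, so $n = m$. Writing $N_B = \sum_{i=1}^m \mathbf{1}[\theta_i \in B]$ for the number of sampled points landing in $B$ and $\widehat\nu(B) = N_B/m$, we have $\E[\mathbf{1}_B] = \nu(B)$ and $\E_m[\mathbf{1}_B] = \widehat\nu(B)$. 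Since each $f \in \F$ induces at most one dichotomy on any point set, $S_{2m}(\F) \le |\F| \le |\B|$, and so Lemma~\ref{lemma:BBL-result} yields a single event of probability at least $1-\delta$ on which, simultaneously for all $B \in \B$,
\[ \nu(B) - \widehat\nu(B) \ \le \ 2\sqrt{\widehat\nu(B)}\,\sqrt{\frac{\log|\B| + \log(4/\delta)}{m}} . \]

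Second, I would fix $B \in \B$ on this event and argue by contraposition. Set $c = \sqrt{(\log|\B| + \log(4/\delta))/m}$, which is positive since $\log(4/\delta) > 0$, and observe that $g(t) = t + 2c\sqrt{t}$ is strictly increasing on $[0,\infty)$. Suppose $N_B < k$, equivalently $\widehat\nu(B) < k/m$. Then the displayed bound gives
\[ \nu(B) \ \le \ g(\widehat\nu(B)) \ < \ g(k/m) \ = \ \frac{k}{m} + 2c\sqrt{\frac{k}{m}} \ = \ \frac{k}{m} + \frac{2}{m}\sqrt{k\bigl(\log|\B| + \log(4/\delta)\bigr)} . \]
Taking the contrapositive, on the good event any $B$ with $\nu(B) \ge \frac{k}{m} + \frac{2}{m}\sqrt{k(\log|\B| + \log(4/\delta))}$ satisfies $N_B \ge k$, which is exactly the claim.

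No step here is a genuine obstacle; this really is a corollary, as the paper already signals. The only points needing a modicum of care are (i) recognizing that the conclusion of Lemma~\ref{lemma:BBL-result} is a uniform statement over all of $\F$, so the ``for all $B \in \B$'' comes with no additional union bound, and (ii) using the strict monotonicity of $t \mapsto t + 2c\sqrt{t}$ so that the hypothetical bound $\widehat\nu(B) < k/m$ can be substituted directly into the right-hand side, eliminating the empirical quantity $\widehat\nu(B)$ and producing the threshold in the exact form stated.
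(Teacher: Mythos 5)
Your proof is correct and is exactly the intended derivation: the paper declares Lemma~\ref{lemma:finite-ball-concentration} a corollary of Lemma~\ref{lemma:BBL-result} (via the observation $S_{2m}(\F)\le|\F|\le|\B|$) and leaves the rearrangement implicit, and you have simply filled in the omitted algebra. Your two points of care---that the BBL bound is already uniform over $\F$ so no extra union bound is needed, and that monotonicity of $t\mapsto t+2c\sqrt{t}$ lets you substitute $\widehat\nu(B)<k/m$ to produce exactly the stated threshold $\frac{k}{m}+\frac{2}{m}\sqrt{k(\log|\B|+\log(4/\delta))}$---are precisely the right ones.
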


The $r$-covering number of a set $M$, denoted $C(M,r)$, is the minimum number of balls of radius $r$, centered at points in $M$, needed to cover $M$.  A related notion is the $r$-packing number, denoted $P(M,r)$, which is the maximum number of balls of radius $r/2$, centered at points in $M$, that do not overlap. A well-known fact is that $|C(M,r)| \leq |P(M,r)|$. The following lemma shows that one consequence of \eqref{eqn:measure-lower-bound} is a bound on the covering number of $M$.

\begin{lemma}
\label{lem:manifold-covering-number}
For any $c_o, r_o > 0$ obeying \eqref{eqn:measure-lower-bound} and $r < 2r_o$, $C(M, r) \leq  (2/r)^{d_o}/c_o$.
\end{lemma}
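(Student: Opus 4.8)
The plan is to bound the covering number by the packing number, using the inequality $C(M,r) \le P(M,r)$ recalled just above the statement, and then to control the packing number with the measure lower bound \eqref{eqn:measure-lower-bound}. Concretely, I would fix a maximal collection of points $x_1,\ldots,x_N \in M$ that serve as the centers of pairwise non-overlapping balls of radius $r/2$; by the definition of the packing number $N \le P(M,r)$, and by the covering--packing inequality $C(M,r) \le P(M,r)$, so it suffices to upper bound $N$ (and in fact we may simply take $N$ to be any such collection).

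Next, since $r < 2r_o$ we have $r/2 < r_o$, so each ball $B(x_i, r/2)$ is a Euclidean ball in $\R^d$ centered at a point of $M$ with radius below the threshold $r_o$; hence \eqref{eqn:measure-lower-bound} applies and gives $\nu(B(x_i, r/2)) \ge c_o (r/2)^{d_o}$. Because these balls are pairwise disjoint and $\nu$ is a probability measure, $\sum_{i=1}^N \nu(B(x_i, r/2)) \le 1$, so $N \, c_o (r/2)^{d_o} \le 1$. Rearranging yields $N \le 2^{d_o}/(c_o r^{d_o}) = (2/r)^{d_o}/c_o$, and combining with $C(M,r) \le N$ completes the argument.

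There is essentially no hard step here; this is a routine volume (or rather measure) packing argument. The one point that requires care — and the reason the hypothesis $r < 2r_o$ (rather than $r < r_o$) appears — is that \eqref{eqn:measure-lower-bound} is only guaranteed for radii at most $r_o$, so it must be invoked at radius $r/2$ rather than at radius $r$. It is also worth stating explicitly that the relevant disjointness is that of the full Euclidean balls $B(x_i, r/2)$ in $\R^d$, not merely of their intersections with $M$, since that is what licenses summing the $\nu$-masses against the total mass $1$.
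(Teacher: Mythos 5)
Your proof is correct and follows essentially the same route as the paper: bound the covering number by the packing number, then upper-bound the packing number by using the measure lower bound \eqref{eqn:measure-lower-bound} on disjoint balls of radius $r/2$ summing to total mass at most $1$. Your explicit observation that the hypothesis $r < 2r_o$ is what licenses applying \eqref{eqn:measure-lower-bound} at radius $r/2$ is a nice clarification that the paper leaves implicit.
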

\begin{proof}
Let $\P$ denote an $r$-packing of $M$. Then there exists an $r$-covering $\CC$ of $M$ such that $|\CC| \leq |\P|$. Moreover, since $\nu$ is a probability measure, we have
\[ 1 \ = \ \nu(M) \ \geq \ \sum_{B \in \P} \nu(B) \  \geq \ |\P| c_o (r/2)^{d_o} \ \geq \  |\CC| c_o (r/2)^{d_o}. \]
Rearranging completes the proof.
\end{proof}

We are now ready to prove Theorem~\ref{thm:manifold-distrib-approx}.

Take $r$ to be
\[ r \ = \ \frac{4}{{c_o}^{1/d_o}}\left(\frac{2k}{m} + \frac{d_o \log(8m/k) + \log(4/\delta)}{m}\right)^{1/d_o}. \] 
By assumption, $r \leq r_o$. Now take $\B$ to be a minimal $r/2$-covering of $M$. By Lemma~\ref{lem:manifold-covering-number}, we know $|\B| \leq  (4/r)^{d_o}/c_o$. Moreover, for each $\theta_j$, there is some $B \in \B$ satisfying $B \subset B(\theta_j, r)$, meaning
\[ \nu(B(\theta_j, r)) \ \geq \ \nu(B) \ \geq \ \frac{2k}{m} +  \frac{d_o \log(8m/k) + \log(4/\delta)}{m} \ \geq \ \frac{k}{m} + \frac{2}{m} \sqrt{k \left( \log(|\B| ) + \log(4/\delta) \right)}, \]
where the last inequality follows from the AM-GM inequality and substitution. By Lemma~\ref{lemma:finite-ball-concentration}, we have that with probability at least $1-\delta$,  each $B(\theta_j, r)$ contains $k$ of the $\theta$'s. In other words, $C_j \subset B(\theta_j, r)$ for all $j$. Applying Lemma~\ref{lemma:wta-approx-bd} gives the theorem.

\subsection*{Acknowledgements}
SD is grateful to the National Science Foundation for support under grant CCF-1813160 and to the Institute for Advanced Study for hosting him in Fall 2019, when part of this work was done. CT also acknowledges support from the NSF under grant CCF-1740833.

\bibliographystyle{abbrv}
\bibliography{representation}

\begin{thebibliography}{10}

\bibitem{BS14}
B.~Babadi and H.~Sompolinsky.
\newblock Sparseness and expansion in sensory representations.
\newblock {\em Neuron}, 83:1213--1226, 2014.

\bibitem{B93}
A.~Barron.
\newblock Universal approximation bounds for superpositions of a sigmoidal
  function.
\newblock {\em IEEE Transactions on Information Theory}, 39(3):930--945, 1993.

\bibitem{BBL05}
S.~Boucheron, O.~Bousquet, and G.~Lugosi.
\newblock Theory of classification: A survey of some recent advances.
\newblock {\em ESAIM: Probability and Statistics}, 9:323--375, 2005.

\bibitem{CRT06}
E.~Candes, J.~Romberg, and T.~Tao.
\newblock Stable signal recovery from incomplete and inaccurate measurements.
\newblock {\em Communications on Pure and Applied Mathematics},
  59(8):1207--1223, 2006.

\bibitem{CRAA13}
S.~Caron, V.~Ruta, L.~Abbott, and R.~Axel.
\newblock Random convergence of olfactory inputs in the {D}rosophila mushroom
  body.
\newblock {\em Nature}, 497:113--117, 2013.

\bibitem{CLM11}
M.~Chacron, A.~Longtin, and L.~Maler.
\newblock Efficient computation via sparse coding in electrosensory neural
  networks.
\newblock {\em Current Opinion in Neurobiology}, 21:752--760, 2011.

\bibitem{CD10}
K.~Chaudhuri and S.~Dasgupta.
\newblock Rates of convergence for the cluster tree.
\newblock In {\em Advances in Neural Information Processing Systems}, 2010.

\bibitem{CD14}
K.~Chaudhuri and S.~Dasgupta.
\newblock Rates of convergence for nearest neighbor classification.
\newblock In {\em Advances in Neural Information Processing Systems}, 2014.

\bibitem{CH67}
T.~Cover and P.~Hart.
\newblock Nearest neighbor pattern classification.
\newblock {\em IEEE Transactions on Information Theory}, 13:21--27, 1967.

\bibitem{C89}
G.~Cybenko.
\newblock Approximation by superpositions of a sigmoidal function.
\newblock {\em Mathematics of Control, Signals and Systems}, 2(4):303--314,
  1989.

\bibitem{DSSN18}
S.~Dasgupta, T.~Sheehan, C.~Stevens, and S.~Navlakha.
\newblock A neural data structure for novelty detection.
\newblock {\em Proceedings of the National Academy of Sciences},
  115(51):13093--13098, 2018.

\bibitem{DSN17}
S.~Dasgupta, C.~Stevens, and S.~Navlakha.
\newblock A neural algorithm for a fundamental computing problem.
\newblock {\em Science}, 358:793--796, 2017.

\bibitem{DGL96}
L.~Devroye, L.~Gyorfi, and G.~Lugosi.
\newblock {\em A Probabilistic Theory of Pattern Recognition}.
\newblock Springer, 1996.

\bibitem{D06}
D.~Donoho.
\newblock Compressed sensing.
\newblock {\em IEEE Transactions on Information Theory}, 52(4):1289--1306,
  2006.

\bibitem{F89}
K.-I. Funahashi.
\newblock On the approximate realization of continuous mappings by neural
  networks.
\newblock {\em Neural Networks}, 2(3):183--192, 1989.

\bibitem{HSW89}
K.~Hornik, M.~Stinchcombe, and H.~White.
\newblock Multilayer feedforward networks are universal approximators.
\newblock {\em Neural Networks}, 2(5):359--366, 1989.

\bibitem{K09b}
P.~Kanerva.
\newblock Hyperdimensional computing: An introduction to computing in
  distributed representation with high-dimensional random vectors.
\newblock {\em Cognitive Computation}, 1(2):139--159, 2009.

\bibitem{LBCLM14}
A.~Lin, A.~Bygrave, A.~de~Calignon, T.~Lee, and G.~Miesenbock.
\newblock Sparse, decorrelated odor coding in the mushroom body enhances
  learned odor discrimination.
\newblock {\em Nature Neuroscience}, 17(4), 2014.

\bibitem{MMS16}
M.~Maggioni, S.~Minsker, and N.~Strawn.
\newblock Multiscale dictionary learning: non-asymptotic bounds and robustness.
\newblock {\em Journal of Machine Learning Research}, 17(1):43--93, 2016.

\bibitem{MTJ09}
N.~Masse, G.~Turner, and G.~Jefferis.
\newblock Olfactory information processing in {D}rosophila: review.
\newblock {\em Current Biology}, 19:R700--R713, 2009.

\bibitem{NSW06}
P.~Niyogi, S.~Smale, and S.~Weinberger.
\newblock Finding the homology of submanifolds with high confidence from random
  samples.
\newblock {\em Discrete and Computational Geometry}, 2006.

\bibitem{OBW10}
S.~Olsen, V.~Bhandawat, and R.~Wilson.
\newblock Divisive normalization in olfactory population codes.
\newblock {\em Neuron}, 66(2):287--299, 2010.

\bibitem{OF04}
B.~Olshausen and D.~Field.
\newblock Sparse coding of sensory inputs.
\newblock {\em Current Opinion in Neurobiology}, 14:481--487, 2004.

\bibitem{PV19}
C.~Papadimitriou and S.~Vempala.
\newblock Random projection in the brain and computation with assemblies of
  neurons.
\newblock In {\em Innovations in Theoretical Computer Science}, 2019.

\bibitem{RR07}
A.~Rahimi and B.~Recht.
\newblock Random features for large-scale kernel machines.
\newblock In {\em Advances in Neural Information Processing Systems}, 2007.

\bibitem{SA09}
D.~Stettler and R.~Axel.
\newblock Representations of odor in the piriform cortex.
\newblock {\em Neuron}, 63:854--864, 2009.

\bibitem{S77}
C.~Stone.
\newblock Consistent nonparametric regression.
\newblock {\em Annals of Statistics}, 5:595--645, 1977.

\bibitem{S80}
C.~Stone.
\newblock Optimal rates of convergence for nonparametric estimators.
\newblock {\em Annals of Statistics}, 8(6):1348--1360, 1980.

\bibitem{TBL08}
G.~Turner, M.~Bazhenov, and G.~Laurent.
\newblock Olfactory representations by {D}rosophila mushroom body neurons.
\newblock {\em J. Neurophysiol.}, 99:734--746, 2008.

\bibitem{W13}
R.~Wilson.
\newblock Early olfactory processing in {D}rosophila: Mechanisms and
  principles.
\newblock {\em Annual Review of Neuroscience}, 36:217--241, 2013.

\end{thebibliography}

\end{document}